\documentclass{article}
\usepackage{subcaption}
\usepackage{multirow}
\usepackage{algorithm}

\PassOptionsToPackage{numbers, sort&compress}{natbib}

\usepackage[main, final]{neurips_2026}

\usepackage[utf8]{inputenc} 
\usepackage[T1]{fontenc}    
\usepackage{hyperref}       
\usepackage{url}            
\usepackage{booktabs}       
\usepackage{amsfonts}       
\usepackage{nicefrac}       
\usepackage{microtype}      
\usepackage{xcolor}         

\usepackage{hyperref}
\usepackage{algpseudocode}

\usepackage{amsmath}
\usepackage{amssymb}
\usepackage{mathtools}
\usepackage{amsthm}

\usepackage[capitalize,noabbrev]{cleveref}

\theoremstyle{plain}
\newtheorem{theorem}{Theorem}[section]
\newtheorem{proposition}[theorem]{Proposition}

\theoremstyle{definition}

\theoremstyle{remark}

\usepackage[utf8]{inputenc} 
\usepackage[T1]{fontenc}    
\usepackage{hyperref}       
\usepackage{url}            
\usepackage{booktabs}       
\usepackage{amsfonts}       
\usepackage{nicefrac}       
\usepackage{microtype}      
\usepackage{xcolor}         
\usepackage[table]{xcolor}

\definecolor{rankone}{RGB}{255,238,210}   
\definecolor{ranktwo}{RGB}{225,238,255}   
\definecolor{rankthree}{RGB}{255,230,240} 
\title{Advectra: Asymmetric Latent Transport for Non-Stationary Physics}

\author{%
  Nodens Koren \\
  Department of Computer Science\\
  ETH Zürich\\
  Zürich, Switzerland \\
  \texttt{nodens.koren@inf.ethz.ch} \\
  \And
  Thomas Hofmann \\
  Department of Computer Science\\
  ETH Zürich\\
  Zürich, Switzerland \\
  \texttt{thomas.hofmann@inf.ethz.ch} \\
  \AND
  Georgios Kissas \\
  Swiss Data Science Center \\
  ETH Zürich\\  
  Zürich, Switzerland \\
  \texttt{georgios.kissas@sdsc.ethz.ch} \\
}

\begin{document}

\maketitle

\begin{abstract}


Many latent neural operators represent input and output fields in a stationary latent chart. In particular, common latent routing mechanisms use fixed or shared assignment weights for feature projection and reconstruction, limiting their ability to model transport-dominated systems where coherent structures move relative to fixed coordinate frames. We propose \textbf{Advectra}, a transport-aware latent operator that introduces a regularized kinematic coordinate map to decouple source and target coordinate systems. This yields an approximately co-moving latent reference frame and enables asymmetric feature aggregation and reconstruction. Combined with a geometry-aware ordering mechanism for state-space models, Advectra captures advective dynamics while maintaining stable global interactions. Advectra achieves the best performance among evaluated geometry-constrained and form-free baselines on advection-dominated benchmarks, including passive scalar transport in Navier--Stokes flows and Rayleigh--Taylor instability, while demonstrating strong generalization on real-world engineering tasks. These results highlight the benefit of explicit moving-frame structure in neural operators for non-stationary physics.
\end{abstract}

\section{Introduction}
\label{sec:introduction}

While neural operators have emerged as powerful surrogate models for accelerating scientific simulations \citep{lu2021learning, lifourier, wu2024transolver}, many latent neural operator architectures still represent input and output fields through fixed or tied latent charts. Historically, operator learning has often been treated primarily as a challenge of spatial discretization rather than dynamic transport. A large body of recent work has focused on designing architectures that operate directly on unstructured domains, using transformation-based approaches \citep{li2023fourier, ahmad2024diffeomorphic} or graph- and attention-based tokenization \citep{pfaff2020learning, wu2024transolver} to bypass rigid spatial constraints.

This geometric perspective has significantly improved scalability and flexibility, but many latent-routing formulations still encode and decode through a stationary assignment structure. In particular, several physics-attention-style latent operator architectures rely on a shared or tied routing matrix $W$ that projects features into a latent space and reconstructs them using the same assignment structure, $Z = W^\top V$, $Y = W Z'$. This ties aggregation and reconstruction to a single latent chart, imposing a shared reference frame between input and output representation. \looseness=-1

This assumption is fundamentally misaligned with a broad class of physical systems. In transport-dominated phenomena—such as turbulent flows, mixing processes, and bluff-body aerodynamics—the physical state evolves through continuous advection, with mass and momentum migrating across the domain. The relevant structure is therefore not static, but non-stationary: coherent features such as vortices, wakes, and shear layers move, deform, and interact over time. Modeling such systems requires operators that can represent transformations between evolving coordinate frames, rather than mappings defined on a fixed spatial graph.

When applied to these settings, stationary routing mechanisms introduce an inherent structural limitation. Because the same assignment matrix is used for both projection and reconstruction, features are implicitly tied to their original spatial locations. As a result, advected structures must be represented indirectly through repeated local mixing, often leading to excessive diffusion and the loss of sharp transport dynamics. This mismatch becomes particularly pronounced in high-dimensional or highly non-linear regimes, where accurate modeling depends on preserving coherent motion across the domain.

In this work, we propose a \textbf{transport-aware latent operator} that explicitly models non-stationary transformations. Instead of assuming a fixed correspondence between input and output coordinates, we introduce a learned, locally regularized kinematic map on an embedded token-coordinate chart, defining distinct source and target frames. We construct a latent anchor basis coupled to the learned token-coordinate chart, forming an \emph{approximately co-moving latent reference frame} that adapts to the underlying spatiotemporal or geometric structure. By evaluating feature aggregation and reconstruction as asymmetric operations in these transported frames, the operator provides a mechanism for capturing advective motion and evolving spatial structure.

To enable consistent global interactions within this dynamic setting, we further introduce a geometry-aware ordering of the latent representation. Specifically, we align a 1D state-space sequence model (SSM) with the transported anchor geometry. Because SSMs integrate state transitions sequentially, ordering the latent representation along a continuous physical path encourages the recurrence to integrate information across locally adjacent regions of the domain, rather than arbitrary, disconnected permutations of the latent space.

Our contributions are summarized as follows:
\begin{itemize}
    \item \textbf{Transport-Aware Operator Formulation:} We introduce a framework that decouples source and target latent coordinate charts via a learned, locally invertible kinematic map on an embedded token-coordinate chart, enabling neural operators to represent non-stationary transformations through transport-aware source and target frames.
    \item \textbf{Asymmetric Latent Routing:} We replace symmetric routing with a pair of distinct assignment operators, allowing features to be gathered and reconstructed in different coordinate frames without artificial spatial constraints.
    \item \textbf{Geometry-Ordered Latent Dynamics:} We propose an ordering mechanism that aligns latent representations with transported geometry, enabling stable and geometrically coherent sequence modeling using recurrent SSM backbones.
\end{itemize}

Empirically, our method achieves best performance among the evaluated methods across a range of challenging benchmarks, supporting the importance of transport-aware and non-stationary inductive biases for advancing neural operators beyond static spatial approximations.

\section{Background}
\label{sec:background}

\subsection{Neural Operators Beyond Regular Grids}

The foundational goal of neural operators is to learn resolution-invariant mappings between infinite-dimensional function spaces \citep{kovachki2023neural}. Early models such as the Fourier Neural Operator (FNO) \citep{lifourier} achieved strong performance on uniform grids, but real-world physical systems require handling irregular geometries and complex boundary conditions. Consequently, a large body of work has focused on designing architectures that operate directly on unstructured domains.

Graph-based approaches such as the Graph Neural Operator (GNO) \citep{li2020neural} and MeshGraphNet \citep{pfaff2020learning} construct message-passing structures over spatial nodes, while attention-based methods such as OFormer \citep{litransformer} and Transolver \citep{wu2024transolver} treat the domain as a set of unordered tokens. More recently, SpiderSolver \citep{qispidersolver} introduces geometry-aware partitioning by constructing boundary-guided ``spiderweb'' tokens, combining coarse-grained attention across tokens with fine-grained interactions near domain boundaries. \looseness=-1

While these approaches significantly improve flexibility on irregular domains, they share a common structural assumption: the operator is defined over a fixed spatial coordinate frame. Even when the representation is unstructured or adaptively partitioned, the underlying geometry remains stationary. As a result, these methods are well-suited for capturing localized spatial interactions, but are less effective in settings where the relevant structures evolve dynamically over time.

\subsection{Latent Routing and the Limits of Form-Free Operators}

To address the computational challenges of operating on large spatial domains, recent work has introduced latent routing mechanisms. Models such as Transolver employ ``Physics-Attention,'' which projects high-resolution inputs into a smaller set of latent slices, performs attention in this compressed space, and reconstructs the output through the same assignment structure. This design introduces a strong inductive bias by grouping spatial points into structured latent regions.

Recent work has questioned the necessity of such explicit structure. LinearNO \citep{hu2025transolver} shows that symmetric Physics-Attention can be interpreted as a constrained form of generalized linear attention. By removing structured slice-to-slice interactions and relaxing the symmetric weight-sharing constraints between Query and Key projections, LinearNO achieves comparable or improved performance on several benchmarks with reduced parameter complexity.

This result suggests a ``form-free'' perspective on operator learning: that sufficiently expressive attention mechanisms can approximate the desired mappings without explicitly encoding spatial structure. However, such approaches implicitly assume that the operator can be learned without enforcing directional or geometric constraints on the representation.

\subsection{Transport-Dominated Regimes and the Role of Inductive Bias}

While form-free approaches are effective on many standard benchmarks—often characterized by elliptic or diffusion-dominated behavior—we find that this assumption breaks down in transport-dominated regimes. These systems are governed by advective dynamics, where information propagates directionally and the relevant structures evolve continuously across the domain.

In our comparison, we evaluate form-free models such as LinearNO against structured latent operators equipped with physics-aware routing. On highly non-linear, advection-dominated problems—including passive scalar transport in Navier-Stokes flows and forced Kolmogorov turbulence—we observe that structured operators consistently outperform their form-free counterparts.

This performance gap highlights a key limitation of form-free attention: while it can approximate smoothing operators associated with diffusion, it lacks an explicit mechanism for representing directional transport and evolving coordinate frames. At the same time, geometry-aware spatial partitioning methods—such as SpiderSolver—operate directly in the physical coordinate domain and remain tied to a fixed spatial frame, limiting their ability to track moving structures.

Taken together, these observations suggest that neither fully unconstrained nor purely spatially structured approaches are sufficient for complex physical systems. Instead, effective operator learning in transport-dominated regimes requires inductive biases that explicitly account for non-stationarity—capturing both the evolving geometry of the domain and the directional flow of information, necessitating a transition from static grids to dynamically transported, approximately co-moving reference frames. \looseness=-1

\begin{figure}[t]
    \centering
    \includegraphics[width=\textwidth]{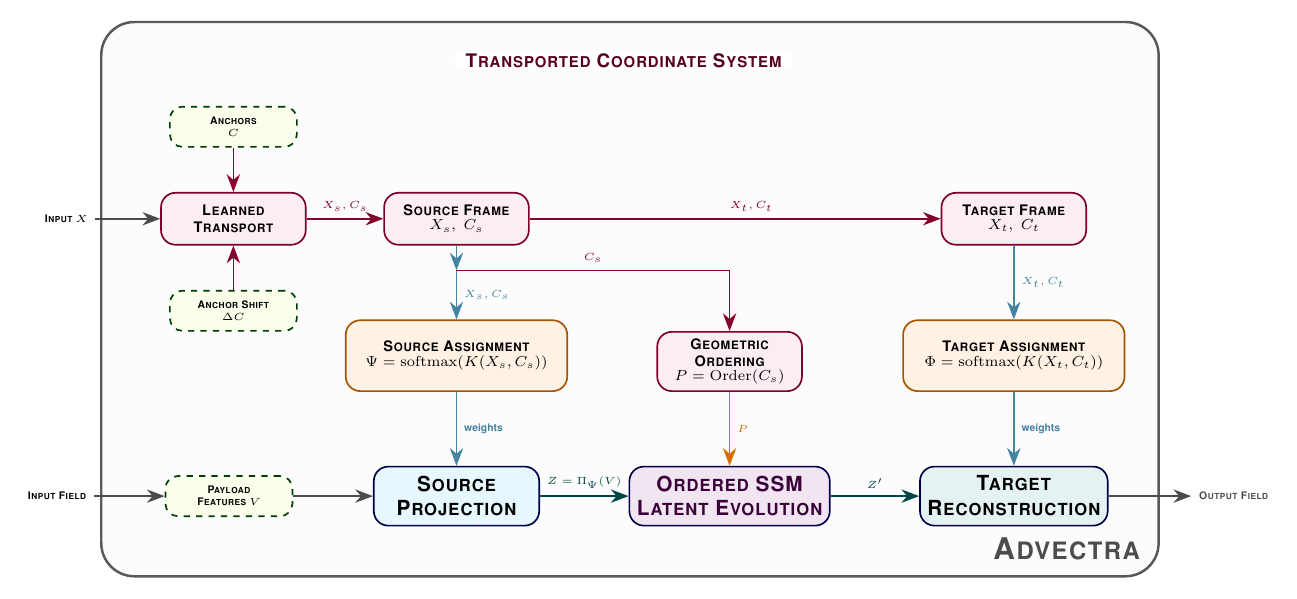}
    \caption{Overview of the Advectra operator.}
    \label{fig:flow_chart}
\end{figure}

\section{Method}
\label{sec:method}

We propose a transport-inspired latent operator for representing non-stationary transformations. The central idea is to decouple feature aggregation and reconstruction through learned source and target coordinate charts, while representing field values via a latent anchor basis. This section presents the high-level mathematical formulation; implementation details such as kernel parameterization, Lipschitz constraint, sorting refresh, and hyperparameters are provided in Appendix~\ref{app:implementations}.


Let \(X_{\mathrm{phys}}\in\mathbb{R}^{M\times d_x}\) denote physical spatial coordinates and \(V_{\mathrm{phys}}\in\mathbb{R}^{M\times d_u}\) denote input features, where \(M\) is the number of spatial points, \(d_x\) is the physical coordinate dimension, and \(d_u\) is the number of input channels. For spatiotemporal problems, let \(t\) denote the prediction time or time offset, broadcast to all spatial points, and form the input token representation
\[
    H_{\mathrm{phys}}=(X_{\mathrm{phys}},V_{\mathrm{phys}},t).
\]
For steady-state or time-independent benchmarks, \(t\) is omitted. We embed the input tokens into a coordinate chart and a feature payload:
\[
    X=\phi_x(H_{\mathrm{phys}})\in\mathbb{R}^{M\times d},
    \qquad
    V=\phi_v(H_{\mathrm{phys}})\in\mathbb{R}^{M\times D},
\]
where \(d\) is the embedded coordinate dimension and \(D\) is the latent feature dimension. Thus, \(X\) is a learned token-coordinate chart used for transport and routing, while \(V\) contains the feature values passed to the latent operator. We introduce \(G\) latent anchors \(C\in\mathbb{R}^{G\times d}\), where \(G\ll M\).

\subsection{Transported Coordinate Systems}
\label{sec:tcs}

We define an anchor-frame context that summarizes both the source anchor chart and the average source-target offset:
\[
    s=\bar C\oplus\bar\Delta,\qquad
    \bar C=\frac{1}{G}\sum_{g=1}^{G}C_g,\qquad
    \bar\Delta=\frac{1}{G}\sum_{g=1}^{G}\Delta_g,
\]
where \(\oplus\) denotes concatenation and \(\Delta\in\mathbb{R}^{G\times d}\) is a learnable anchor-wise target offset initialized at zero. The context \(s\) is an anchor-frame context, not a direct physical-state summary; sample and time dependence enter through the embedded token chart \(X=\phi_x(H_{\mathrm{phys}})\) and feature payload \(V=\phi_v(H_{\mathrm{phys}})\).

Using \(s\), we introduce two learned smooth, Lipschitz-constrained displacement maps,
\[
\mathcal{T}: \mathbb{R}^{d}\times\mathbb{R}^{2d}\to\mathbb{R}^{d},
\qquad
\mathcal{U}: \mathbb{R}^{d}\times\mathbb{R}^{2d}\to\mathbb{R}^{d},
\]
which produce coordinate-dependent displacements for embedded token coordinates and anchors. The constraint yields locally non-folding residual updates; details are given in Appendix~\ref{app:implementations}.

Applying these maps yields transported coordinates
\[
X^{src}=X+\mathcal{T}(X,s),
\qquad
X^{tgt}=X^{src}+\mathcal{T}(X^{src},s),
\]
and transported anchors
\[
C^{src}=C+\mathcal{U}(C,s),
\qquad
C^{tgt}=(C+\Delta)+\mathcal{U}(C+\Delta,s).
\]
We share \(\mathcal{T}\) and \(\mathcal{U}\) across source and target construction to impose a common local displacement rule rather than learning two unrelated coordinate charts. Since the maps are evaluated at different coordinate states, their effective displacements differ between \(X\) and \(X^{src}\), and between \(C\) and \(C+\Delta\), producing distinct source and target charts under this shared transported update.

The \textbf{source chart} \((X^{src},C^{src})\) is used for feature aggregation, and the \textbf{target chart} \((X^{tgt},C^{tgt})\) is used for reconstruction. Because \(X\) is computed from \(H_{\mathrm{phys}}\), the transported charts are time-conditioned on spatiotemporal tasks and geometry-conditioned on steady-state tasks. The learned maps should therefore be interpreted as moving-chart transformations for aligning source and target representations, rather than as direct recovery of sample-wise physical trajectories.

\subsection{Latent Partition-of-Unity Representation}

Feature aggregation is performed via a normalized assignment from spatial points to latent anchors. Let \(\kappa:\mathbb{R}^{d}\times\mathbb{R}^{d}\to\mathbb{R}_{+}\) denote a positive learnable similarity kernel. The source assignment is
\[
\Psi_{ig}
=
\frac{\kappa(x^{src}_i,c^{src}_g)}
{\sum_{k=1}^{G}\kappa(x^{src}_i,c^{src}_k)},
\qquad
\Psi\in\mathbb{R}^{M\times G}.
\]
This defines a partition of unity over anchors for each spatial point. Spatial features are projected into latent variables by
\[
Z_g
=
\frac{\sum_{i=1}^{M}\Psi_{ig}V_i}
{\sum_{i=1}^{M}\Psi_{ig}},
\qquad
Z\in\mathbb{R}^{G\times D}.
\]
We denote this source projection by \(Z=\Pi_\Psi(V)\). The normalization over spatial indices reduces sensitivity to local sampling density.

\subsection{Ordered Latent Dynamics}

Each latent variable is associated with a transported anchor pair. To enable coherent sequential modeling, anchors are ordered according to their transported geometry. We define anchor midpoints
\[
m_g=\frac{1}{2}(C^{src}_g+C^{tgt}_g),
\]
and construct a permutation matrix \(P_\pi\in\{0,1\}^{G\times G}\) using a deterministic greedy nearest-neighbor traversal over \(\{m_g\}\) in the transported coordinate chart. The ordered latent sequence is processed by a bidirectional state-space sequence model:
\[
Z'=\sigma\!\left(P_\pi^\top\,\mathcal{S}_{SSM}(P_\pi Z)\right),
\]
where \(\mathcal{S}_{SSM}\) is the bidirectional SSM operator and \(\sigma\) is a nonlinear activation. This ordering aligns sequential dynamics with geometrically adjacent latent regions, reducing discontinuous interactions from unordered tokens.

\subsection{Asymmetric Reconstruction}

Reconstruction is performed in the target chart using
\[
\Phi_{ig}
=
\frac{\kappa(x^{tgt}_i,c^{tgt}_g)}
{\sum_{k=1}^{G}\kappa(x^{tgt}_i,c^{tgt}_k)},
\qquad
\Phi\in\mathbb{R}^{M\times G}.
\]
The output is reconstructed as
\[
Y_i=\sum_{g=1}^{G}\Phi_{ig}Z'_g,
\qquad
Y\in\mathbb{R}^{M\times D}.
\]
We denote this target decoding operation by \(Y=D_\Phi(Z')\). Unlike stationary operators that reuse a single assignment, Advectra aggregates in the source chart and reconstructs in the target chart, providing an explicit architectural mechanism for non-stationary transformations.

\paragraph{Operator form and decoder interpretation.}
The full Advectra block can be summarized as 
\[
Y
=
D_\Phi\!\left(\mathcal{F}_{\theta,\pi}(\Pi_\Psi(V))\right),
\]
where \(\Psi=\mathcal{A}(X^{src},C^{src})\), \(\Phi=\mathcal{A}(X^{tgt},C^{tgt})\), \(\Pi_\Psi\) is the source projection, \(D_\Phi\) is the target decoder, and \(\mathcal{F}_{\theta,\pi}\) denotes the ordered bidirectional latent SSM dynamics.

This form also connects Advectra to the moving-basis perspective in operator learning. Linear-decoder architectures represent outputs using a fixed basis, $u(y)\approx \sum_{g=1}^{G} z_g \phi_g(y),$
and are therefore limited by how well the target solution manifold can be approximated by a low-dimensional linear subspace. NOMAD~\citep{seidman2022nomad} addresses this limitation with a nonlinear decoder \(g_\theta(z,y)\), enabling the output representation to adapt nonlinearly to the latent code. Advectra provides a complementary form of adaptation: for fixed source and target charts, reconstruction remains linear in the evolved latent coefficients \(Z'\), but the decoder basis is evaluated in a learned target chart through \(\Phi=\mathcal{A}(X^{tgt},C^{tgt})\). Thus, the basis geometry changes through the transported coordinate system rather than through a fully nonlinear code-conditioned decoder. This chart-adaptive view is particularly natural for transport-dominated physics, where the main difficulty is often aligning moving structures between source and target frames.


\section{Experiments}
\subsection{Experimental Setup}


\textbf{PDE Benchmarks.}
Standard neural operator benchmarks (e.g., Darcy flow, Elasticity) are predominantly governed by elliptic or parabolic equations, which exhibit smooth and diffusive behavior. To specifically assess performance on \emph{hyperbolic} and advection-dominated dynamics, we consider three challenging problems from the POSEIDON benchmark \citep{herde2024POSEIDON}: (i) 2D compressible Euler equations with Rayleigh--Taylor instability (GCE-RT), (ii) 2D Navier--Stokes with passive tracer transport (NS-Tracer-PwC), and (iii) 2D forced Navier--Stokes Kolmogorov flow (FNS-KF). These problems involve strong transport effects, including sharp gradients, filamentary structures, and chaotic mixing. We additionally include the standard Airfoil benchmark \citep{li2023fourier} for comparison with prior work \citep{wu2024transolver, tiwarilatent}.

For NS-Tracer-PwC and FNS-KF, we follow the POSEIDON protocol and train under reduced spatial and temporal resolution via subsampling by a factor of two. We report results at $t=7$ for GCE-RT and at $t=14$ for NS-Tracer-PwC and FNS-KF, corresponding to long-horizon prediction settings.

\textbf{Industrial Design Benchmarks.}
To evaluate scalability and performance on complex geometries, we consider AirfRANS \citep{bonnet2022airfrans}, AirCraft \citep{luotransolver++}, and ShapeNet-Car \citep{chang2015shapenet}. These datasets involve unstructured meshes and high-dimensional 3D flows. In addition to field-wise relative $L_2$ errors, we report Spearman’s rank correlation of aerodynamic coefficients (lift $C_L$ and drag $C_D$) for AirfRANS and ShapeNet-Car to assess the reliability of predicted design rankings.

\textbf{Baselines.}
We compare against a comprehensive set of baseline methods spanning spectral operators (e.g., FNO, U-NO), convolutional and graph-based models, attention-based architectures, and recent latent-space approaches such as Transolver \citep{wu2024transolver}, Transolver++ \citep{luotransolver++}, and LaMO \citep{tiwarilatent}. This selection covers both frequency-domain and physical-space paradigms, as well as different mechanisms for latent interaction and representation learning. Further implementation details and experimental settings are provided in Appendix~\ref{app:implementations}. 

\subsection{Results on PDE Benchmarks}
\label{sec:pde_results}

Table~\ref{tab:pde_results} presents quantitative results across a diverse set of PDE benchmarks, spanning both transport-dominated regimes and globally smooth flow dynamics. Overall, we observe a clear distinction between spectral and physical-space models, which can be explained through the intrinsic structure of the underlying PDEs.

\begin{table*}[t]
\centering
\caption{Performance comparison across different PDE benchmarks. Top-2 results in each column are highlighted: best (orange) and second (blue).}
\label{tab:pde_results}
\begin{sc}
\makebox[\textwidth][c]{%
\resizebox{0.94\textwidth}{!}{%
\begin{tabular}{l|cccc|cc|c|c}
\toprule
\multicolumn{1}{c|}{\multirow{2}{*}{\textbf{Model}}} & \multicolumn{4}{c|}{\textbf{GCE-RT}} & \multicolumn{2}{c|}{\textbf{NS-Tracer-PwC}} & \multicolumn{1}{c|}{\textbf{FNS-KF}} & \multicolumn{1}{c}{\textbf{Airfoil}} \\ 
\cmidrule(lr){2-5} \cmidrule(lr){6-7} \cmidrule(lr){8-8} \cmidrule(lr){9-9} 
\multicolumn{1}{c|}{} & $\rho$ $\downarrow$ & $\mathbf{v}$ $\downarrow$ & $p$ $\downarrow$ & tracer $\downarrow$ & $\mathbf{v}$ $\downarrow$ & tracer $\downarrow$ & $\mathbf{v}$ $\downarrow$ & $\mathbf{M}$ $\downarrow$ \\ 
\midrule

U-Net \citep{ronneberger2015u} & 0.0334 & 0.1269 & 0.0039 & 0.1057 & 0.2062 & \cellcolor{ranktwo}0.2047 & 0.1934 & 0.0079 \\ 
NOMAD \citep{seidman2022nomad} & 0.1844 & 0.7386 & 0.0940 & 0.3751 & 0.7755 & 0.6398 & 0.7782 & 0.0379 \\ 

\midrule

FNO \citep{lifourier} & 0.0379 & 0.1346 & 0.0057 & 0.1200 & \cellcolor{ranktwo}0.1577 & 0.3426 & \cellcolor{ranktwo}0.1317 & 0.0138 \\
F-FNO \citep{tranfactorized} & 0.0648 & 0.2301 & 0.0149 & 0.1941 & 0.3972 & 0.5349 & 0.3727 & 0.0078 \\
U-NO \citep{rahmanu} & 0.0352 & 0.1105 & 0.0075 & 0.1065 & 0.1604 & 0.2488 & 0.1538 & 0.0078 \\
LSM \citep{wu2023solving} & 0.0421 & 0.1584 & 0.0064 & 0.1326 & 0.2402 & 0.2462 & 0.2199 & 0.0059 \\ 
\midrule

Galerkin \citep{cao2021choose} & 0.0238 & 0.0946 & 0.0024 & 0.0765 & 0.6327 & 0.5550 & 0.5347 & 0.0118 \\
ONO \citep{xiao2024improved} & 0.0764 & 0.1915 & 0.0228 & 0.2208 & 0.5442 & 0.4995 & 0.5530 & 0.0053 \\
Transolver \citep{wu2024transolver} & 0.0185 & 0.0737 & 0.0017 & 0.0605 & 0.1796 & 0.2335 & 0.2632 & 0.0052 \\
Transolver++ \citep{luotransolver++} & \cellcolor{ranktwo}0.0171 & \cellcolor{ranktwo}0.0671 & 0.0018 & \cellcolor{ranktwo}0.0551 & 0.3559 & 0.3882 & 0.4125 & 0.0048 \\
LinearNO \citep{hu2025transolver} & 0.0173 & 0.0683 & \cellcolor{ranktwo}0.0015 & 0.0562 & 0.2326 & 0.2811 & 0.3015 & 0.0049 \\ 
\midrule

LaMO \citep{tiwarilatent} & 0.0179 & 0.0700 & 0.0017 & 0.0584 & 0.1773 & 0.2311 & 0.1627 & \cellcolor{ranktwo}0.0047 \\ 
\midrule

\textbf{Advectra (ours)} & \cellcolor{rankone}\textbf{0.0149} & \cellcolor{rankone}\textbf{0.0595} & \cellcolor{rankone}\textbf{0.0014} & \cellcolor{rankone}\textbf{0.0488} & \cellcolor{rankone}\textbf{0.1209} & \cellcolor{rankone}\textbf{0.1607} & \cellcolor{rankone}\textbf{0.1202} & \cellcolor{rankone}\textbf{0.0041} \\ 
\bottomrule
\end{tabular}%
}%
}
\end{sc}
\vspace{-1em}
\end{table*}

\textbf{Spectral vs Physical-Space Behavior.}
On NS-Tracer-PwC and FNS-KF, spectral models such as FNO achieve strong performance on the velocity fields, even outperforming several transformer-based baselines. This is consistent with the fact that velocity fields in these problems tend to be relatively smooth and globally correlated, making them well-suited to Fourier-based representations.

However, this advantage does not extend to transported quantities. In particular, FNO performs significantly worse on the tracer field, which exhibits sharp gradients and filamentary structures induced by advection. These non-smooth features violate the global smoothness assumptions of spectral methods, leading to substantial degradation in performance. U-NO partially alleviates this limitation through its multi-resolution design, which introduces localized representations and reduces the performance gap on tracer prediction. This suggests that \emph{localization is important for capturing advective transport}, although multi-resolution alone is insufficient to fully resolve complex transport dynamics. \looseness=-1

We also note that NS-Tracer-PwC and FNS-KF are trained under reduced spatial and temporal resolution due to subsampling. Spectral models such as FNO are known to be robust in such settings by leveraging low-frequency structure, which helps explain their strong performance on velocity fields. In contrast, Advectra maintains strong performance without relying on global smoothness assumptions, instead capturing transport dynamics directly, enabling coherent predictions even with limited spatial detail.

On the GCE-RT dataset, which is dominated by highly chaotic instabilities and strong local interactions, physical-space models consistently outperform spectral approaches. The Rayleigh--Taylor instability induces multi-scale mixing and rapidly evolving local structures, where global spectral representations become less effective. \emph{This highlights a key distinction: spectral models excel in smooth regimes, whereas physical-space models are better suited for transport-dominated, non-smooth dynamics.}

\textbf{Latent Interaction and Representation Structure.}
We further investigate the importance of latent interaction through LinearNO, which replaces slice-to-slice attention with a simplified linear operation. While efficient, this design performs poorly on transport-heavy datasets such as NS-Tracer-PwC and FNS-KF. The degradation (e.g., tracer error of $0.2811$ vs. $0.2335$ for Transolver) suggests that \emph{explicit interaction among latent slices is important for advective dynamics}.

We also observe that models with decoupled projections for slice assignment and feature aggregation, such as Transolver and LaMO, tend to outperform coupled-projection variants such as Transolver++ and LinearNO on these transport-heavy benchmarks.

\textbf{Advectra Performance Gains.}
Advectra consistently achieves the best performance across the evaluated benchmarks, demonstrating strong generalization across both spectral-friendly and transport-dominated regimes. Notably, on NS-Tracer-PwC, Advectra reduces the tracer error to $0.1607$, corresponding to a $30.4\%$ relative improvement over the strongest slicing-based baseline, LaMO. Similar gains are observed on FNS-KF, where Advectra achieves over a $26\%$ relative error reduction on velocity prediction compared to LaMO.

These results highlight the importance of explicitly accounting for transport dynamics, suggesting that \emph{Advectra captures global structure while better preserving local transport-dominated features, enabling robust performance across diverse PDE regimes.}

\begin{table*}[t]
\centering
\caption{Quantitative results on industrial benchmarks AirfRANS and ShapeNet-Car. We report relative $L_2$ error ($\downarrow$) and Spearman's rank correlation ($\uparrow$). Top-2 results in each column are highlighted: best (orange) and second (blue).}
\label{tab:industrial_results}
\begin{sc}
\resizebox{0.95\textwidth}{!}{%
\begin{tabular}{l|cccc|cccc}
\toprule
\multicolumn{1}{c|}{\multirow{2}{*}{\textbf{Model}}} 
& \multicolumn{4}{c|}{\textbf{AirfRANS}} 
& \multicolumn{4}{c}{\textbf{ShapeNet-Car}} \\
\cmidrule(lr){2-5} \cmidrule(lr){6-9} 
\multicolumn{1}{c|}{} 
& Volume $\downarrow$ & Surf $\downarrow$ & $C_L$ $\downarrow$ & $\rho_L$ $\uparrow$ 
& Volume $\downarrow$ & Surf $\downarrow$ & $C_D$ $\downarrow$ & $\rho_D$ $\uparrow$ \\ 
\midrule

GraphSAGE \citep{hamilton2017inductive} & 0.0087 & 0.0184 & 0.1476 & 0.9964 & 0.0461 & 0.1050 & 0.0270 & 0.9695 \\
PointNet \citep{qi2017pointnet} & 0.0253 & 0.0996 & 0.1973 & 0.9919 & 0.0494 & 0.1104 & 0.0298 & 0.9583 \\
Graph U-Net \citep{gao2019graph} & 0.0076 & 0.0144 & 0.1677 & 0.9949 & 0.0471 & 0.1102 & 0.0226 & 0.9725 \\
MeshGraphNet \citep{pfaff2020learning} & 0.0214 & 0.0387 & 0.2252 & 0.9945 & 0.0354 & 0.0781 & 0.0168 & 0.9840 \\
GNO \citep{li2020neural} & 0.0269 & 0.0405 & 0.2016 & 0.9938 & 0.0383 & 0.0815 & 0.0172 & 0.9834 \\
Galerkin \citep{cao2021choose} & 0.0074 & 0.0159 & 0.2336 & 0.9951 & 0.0339 & 0.0878 & 0.0179 & 0.9764 \\
Geo-FNO \citep{li2023fourier} & 0.0361 & 0.0301 & 0.6161 & 0.9257 & 0.1670 & 0.2378 & 0.0664 & 0.8280 \\
GNOT \citep{hao2023gnot} & 0.0049 & 0.0152 & 0.1992 & 0.9942 & 0.0329 & 0.0798 & 0.0178 & 0.9833 \\
GINO \citep{li2023geometry} & 0.0297 & 0.0482 & 0.1821 & 0.9958 & 0.0386 & 0.0810 & 0.0184 & 0.9826 \\
3D-GeoCA \citep{deng2024geometry} & - & - & - & - & 0.0319 & 0.0779 & 0.0159 & 0.9842 \\
Transolver \citep{wu2024transolver} & 0.0037 & 0.0142 & 0.1030 & 0.9978 & 0.0225 & 0.0789 & 0.0123 & 0.9915 \\
Transolver++ \citep{luotransolver++} & 0.0025 & 0.0084 & 0.1228 & 0.9976 & 0.0226 & 0.0800 & 0.0132 & 0.9914 \\
LaMO \citep{tiwarilatent} & 0.0020 & 0.0066 & 0.0977 & 0.9983 & 0.0212 & 0.0763 & 0.0119 & 0.9907 \\
SpiderSolver \citep{qispidersolver} & 0.0017 & \cellcolor{ranktwo}0.0043 & 0.0741 & 0.9988 & 0.0210 & \cellcolor{ranktwo}0.0738 & \cellcolor{ranktwo}0.0100 & \cellcolor{ranktwo}0.9928 \\
LinearNO \citep{hu2025transolver} & \cellcolor{ranktwo}0.0011 & 0.0077 & \cellcolor{ranktwo}0.0491 & \cellcolor{ranktwo}0.9992 & \cellcolor{ranktwo}0.0194 & 0.0754 & 0.0106 & 0.9925 \\ 
\midrule

\textbf{Advectra (Ours)} 
& \cellcolor{rankone}\textbf{0.0006} 
& \cellcolor{rankone}\textbf{0.0041} 
& \cellcolor{rankone}\textbf{0.0481} 
& \cellcolor{rankone}\textbf{0.9995} 
& \cellcolor{rankone}\textbf{0.0149} 
& \cellcolor{rankone}\textbf{0.0716} 
& \cellcolor{rankone}\textbf{0.0094} 
& \cellcolor{rankone}\textbf{0.9941} \\

\bottomrule
\end{tabular}%
}
\end{sc}
\end{table*}

\subsection{Results on Industrial Design Benchmarks}
\label{sec:industrial_results}

Tables \ref{tab:industrial_results} and \ref{tab:aircraft_results} summarize the quantitative performance on large-scale industrial benchmarks, with a primary focus on complex 3D external aerodynamics (ShapeNet-Car and AirCraft).

\textbf{Scaling SSMs to 3D Geometries.}
While latent State-Space Models (SSMs), such as LaMO \citep{tiwarilatent}, demonstrate strong performance on 2D PDE benchmarks, we observe that this advantage diminishes when scaling to complex 3D geometries. In particular, on the AirCraft benchmark (Table \ref{tab:aircraft_results}), LaMO no longer consistently outperforms Transformer-based baselines such as Transolver and LinearNO. This suggests that the effectiveness of SSM-based latent sequence modeling degrades in higher-dimensional, irregular domains.

A key challenge lies in constructing a meaningful 1D latent sequence from 3D spatial structures. Compared to 2D settings, where spatial locality can be more naturally preserved, projecting 3D geometries into a sequential representation introduces significantly higher distortion. \emph{This indicates that ordering becomes increasingly lossy as dimensionality increases}, making it difficult for SSMs to maintain coherent long-range structure.

Advectra mitigates this issue by introducing a geometry-aware ordering mechanism that explicitly organizes latent tokens based on spatial proximity. This produces a more stable and structured latent sequence for the SSM core, improving robustness in highly irregular 3D domains. While Advectra also incorporates transport modeling, we observe that the primary gains in these industrial benchmarks are associated with improved geometric organization. We further validate this observation through ablation studies in Sec.~\ref{sec:ablation}.

\textbf{3D Field Prediction.}
With this stabilized latent structure, Advectra achieves best performance among the evaluated methods across the 3D industrial datasets. On ShapeNet-Car, Advectra reduces the volume $L_2$ error to $0.0149$, outperforming both LinearNO ($0.0194$) and LaMO ($0.0212$). The performance gap is even more pronounced on the AirCraft benchmark, which requires the simultaneous prediction of multiple physical fields. While baseline dominance is fragmented—e.g., Transolver performs well on certain velocity components and LinearNO on others—Advectra consistently achieves the lowest error across all variables. Notably, it attains a $37.3\%$ relative improvement on the $U$-velocity field ($0.0057$) over the strongest baseline.

\textbf{Utility for Engineering Design.}
Beyond point-wise accuracy, practical engineering workflows require reliable ranking of candidate designs. Advectra demonstrates strong utility in this setting, achieving the highest Spearman’s rank correlations for macroscopic aerodynamic quantities. On ShapeNet-Car, Advectra achieves near-perfect drag correlation ($\rho_D = 0.9941$), while on AirfRANS it similarly attains the best lift correlation ($\rho_L = 0.9995$). \emph{These results indicate that Advectra not only improves field prediction accuracy but also improves rank-correlation metrics that are relevant for downstream design optimization.}







\begin{table*}[h!]
\centering
\caption{Ablation study of Advectra. We evaluate Geometry-Aware Ordering, Latent Transport, and an independent-warp variant that uses separate source/target displacement maps instead of the shared transported update. Top-2 results per column are highlighted: best (orange) and second (blue).}
\label{tab:ablation_study}
\begin{sc}
\resizebox{0.84\textwidth}{!}{%
\begin{tabular}{l|cc|cccc}
\toprule
\multicolumn{1}{c|}{\multirow{2}{*}{\textbf{Configuration}}} 
& \multicolumn{2}{c|}{\textbf{NS-Tracer-PwC}} 
& \multicolumn{4}{c}{\textbf{ShapeNet-Car}} \\ 
\cmidrule(lr){2-3} \cmidrule(lr){4-7} 
\multicolumn{1}{c|}{} 
& $\mathbf{v}$ $\downarrow$ & tracer $\downarrow$ 
& Volume $\downarrow$ & Surf $\downarrow$ & $C_D$ $\downarrow$ & $\rho_D$ $\uparrow$ \\ 
\midrule

Advectra (Full) 
& \cellcolor{rankone}\textbf{0.1209} 
& \cellcolor{rankone}\textbf{0.1607} 
& \cellcolor{rankone}\textbf{0.0149} 
& \cellcolor{rankone}\textbf{0.0716} 
& \cellcolor{rankone}\textbf{0.0094} 
& \cellcolor{rankone}\textbf{0.9941} \\ 
\midrule

\quad Independent Warps
& 0.1548
& 0.2183
& 0.0174
& \cellcolor{ranktwo}0.0732
& \cellcolor{ranktwo}0.0101
& \cellcolor{ranktwo}0.9931 \\

\quad w/o Ordering
& \cellcolor{ranktwo}0.1446 
& \cellcolor{ranktwo}0.1840 
& 0.0169 
& 0.0766 
& 0.0113 
& 0.9923 \\

\quad w/o Transport 
& 0.1858 
& 0.2435 
& \cellcolor{ranktwo}0.0159 
& 0.0742 
& 0.0108 
& 0.9928 \\

\quad w/o Both 
& 0.2001 
& 0.2525 
& 0.0169 
& 0.0797 
& 0.0129 
& 0.9918 \\ 

\bottomrule
\end{tabular}%
}
\end{sc}
\end{table*}

\section{Analysis of Transport and Ordering}
\label{sec:ablation}

To isolate the contributions of Advectra's core mechanisms, we ablate geometry-aware ordering, latent transport, and their combination. We also test an \emph{Independent Warps} variant, which replaces the shared transported update with separate learned source and target displacement maps. This variant is more flexible than Advectra and tests whether generic coordinate perturbation alone explains the gains. Table~\ref{tab:ablation_study} reports results on NS-Tracer-PwC and ShapeNet-Car.

\textbf{Transient Flows Benefit Most from Transport.}
On the advection-dominated NS-Tracer-PwC dataset, removing transport causes the largest degradation, increasing tracer error by \(51.5\%\) (\(0.1607 \to 0.2435\)). Independent Warps improves over removing transport but remains substantially worse than Advectra (\(0.2183\) vs. \(0.1607\)), suggesting that generic source/target coordinate perturbations alone do not explain the gain. Removing ordering produces a smaller but still notable degradation (\(0.1607 \to 0.1840\)). \emph{These results show that transport is the dominant architectural factor in this transport-dominated regime, while ordering provides complementary structural regularization.}

\textbf{3D Geometries Benefit Most from Geometric Ordering.}
On the steady-state 3D ShapeNet-Car benchmark, the largest drop comes from removing geometry-aware ordering: volume error increases from \(0.0149\) to \(0.0169\), exceeding the degradation from removing transport. Independent Warps gives mixed results, slightly improving some surface/coefficient metrics over no transport while worsening volume error. This suggests that arbitrary coordinate deformation is not consistently beneficial for complex 3D geometries. Without ordering, the latent sequence loses geometric coherence, causing the SSM core to process distant regions as sequence neighbors. \emph{These results identify geometric ordering as the dominant architectural factor for preserving spatial coherence in complex 3D geometry regimes.}

\textbf{Problem-Dependent Architectural Priors.}
The asymmetric impact of these ablations shows that each mechanism matters in different regimes. For transient, transport-heavy PDEs, transport-aware coordinate decoupling is the main driver; for complex 3D boundary-value problems, maintaining geometric consistency under dimensional compression becomes the dominant bottleneck. The Independent Warps result further suggests that Advectra's gain is not merely due to increased coordinate-warping flexibility. \emph{Advectra combines transport and ordering, allowing the architecture to adapt across regimes where either non-stationary motion or geometric compression is limiting.}

\section{Conclusion}

We introduced Advectra, a transport-aware latent operator designed to model non-stationary physical systems. Unlike conventional neural operators that assume a fixed spatial correspondence between input and output domains, Advectra explicitly decouples coordinate systems through a learned transport-inspired coordinate displacement field. This enables asymmetric feature aggregation and reconstruction in distinct frames, allowing the model to represent advective motion more explicitly than symmetric routing mechanisms, which must approximate such motion through repeated local mixing. In addition, we incorporate a geometry-aware ordering mechanism that aligns latent representations with spatial structure, enabling stable and physically meaningful sequence modeling with state-space backbones.

Across a diverse set of benchmarks, including transport-dominated PDEs and large-scale industrial geometries, Advectra consistently outperforms both form-free and geometry-constrained baselines. Our results indicate that explicit transport-aware structure is beneficial for capturing non-stationary dynamics, particularly in settings involving sharp gradients, coherent structures, and long-range advection. At the same time, we show that preserving geometric consistency becomes increasingly important in higher-dimensional domains, where naive sequence representations can severely distort spatial relationships. The ablation study further highlights that these two inductive biases—transport and ordering—play complementary roles depending on the underlying physical regime.

Overall, our findings suggest that advancing neural operators beyond static spatial mappings requires a shift toward representations that account for evolving coordinate systems and directional information flow. Advectra provides a unified framework that integrates these principles, enabling robust performance across both transient and steady-state settings. We believe this perspective opens new directions for operator learning, particularly for problems involving multi-scale transport, complex geometries, and long-horizon prediction.

\newpage

\bibliographystyle{unsrtnat}
\bibliography{example_paper}

\newpage
\appendix

\begin{figure*}[h!]
    \centering
    \includegraphics[width=\textwidth]{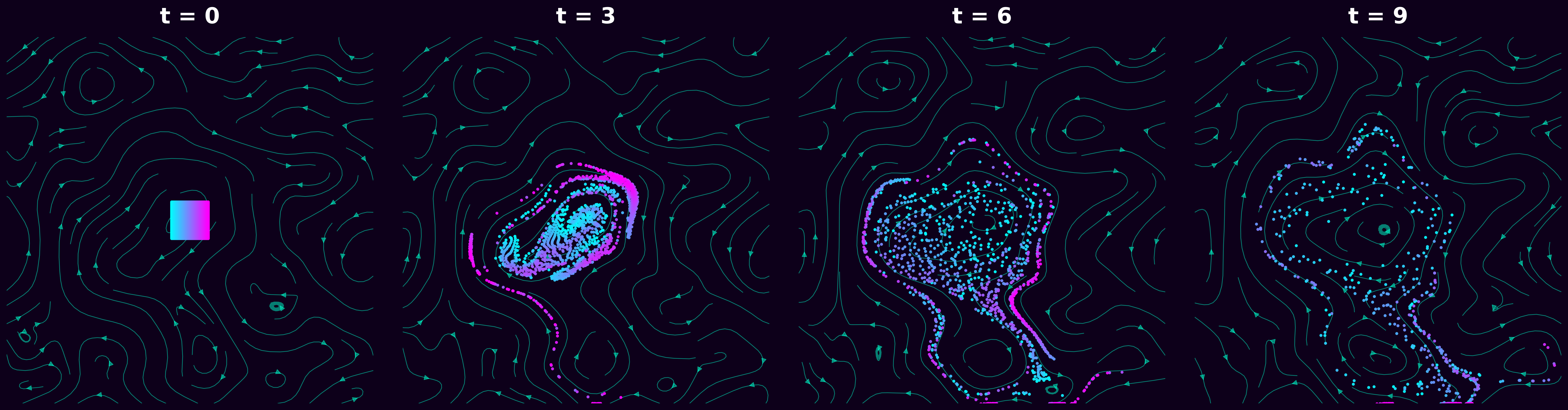}
    \caption{Visualizing the contrast between Eulerian and Lagrangian perspectives in Navier-Stokes flow. The \textbf{Eulerian view} treats the domain as a fixed grid, measuring fluid velocity at specific, stationary locations (cyan streamlines). In contrast, the \textbf{Lagrangian view} follows the individual trajectories of material parcels as they move through space. Here, a dense block of tracers (colored by initial position) acts as a set of ``moving observers.'' As $t$ increases from $0$ to $9$, these tracers are advected by the unsteady flow; their severe stretching and folding demonstrates how the Lagrangian frame of reference deforms over time, physically reconfiguring the spatial relationships between different regions of the fluid.}
    \label{fig:transport_4panel}
\end{figure*}

\begin{table}[H]
\centering
\caption{Quantitative results on the \textbf{AirCraft} benchmark (3D external aerodynamics). We compare against geometric baselines and neural operators. We report relative $L_2$ error ($\downarrow$). Top-2 results in each column are highlighted: best (orange) and second (blue).}
\label{tab:aircraft_results}
\begin{sc}
\resizebox{0.8\textwidth}{!}{%
\begin{tabular}{l|cccccc}
\toprule
\multicolumn{1}{c|}{\multirow{2}{*}{\textbf{Model}}} & \multicolumn{6}{c}{\textbf{AirCraft}} \\ 
\cmidrule(lr){2-7} 
\multicolumn{1}{c|}{} 
& $C_p$ $\downarrow$ & $\rho$ $\downarrow$ & $U$ $\downarrow$ & $V$ $\downarrow$ & $W$ $\downarrow$ & Pressure $\downarrow$ \\ 
\midrule

GraphSAGE \citep{hamilton2017inductive} & 0.1442 & 0.0803 & 0.0148 & 0.2154 & 0.2395 & 0.0923 \\
PointNet \citep{qi2017pointnet} & 0.1472 & 0.0783 & 0.0147 & 0.2104 & 0.2528 & 0.0937 \\
Graph U-Net \citep{gao2019graph} & 0.1475 & 0.0784 & 0.0152 & 0.2225 & 0.2477 & 0.0947 \\
MeshGraphNet \citep{pfaff2020learning} & 0.1372 & 0.0682 & 0.0164 & 0.1835 & 0.2175 & 0.0893 \\
GNO \citep{li2020neural} & 0.1556 & 0.0785 & 0.0167 & 0.2012 & 0.2318 & 0.1071 \\
Galerkin \citep{cao2021choose} & 0.0990 & 0.0429 & 0.0110 & 0.1304 & 0.1077 & 0.0480 \\
Geo-FNO \citep{li2023fourier} & 0.2586 & 0.1227 & 0.0300 & 0.3014 & 0.2757 & 0.1432 \\
GNOT \citep{hao2023gnot} & 0.0824 & 0.0407 & 0.0092 & 0.1141 & 0.1164 & \cellcolor{ranktwo}0.0436 \\
GINO \citep{li2023geometry} & 0.1513 & 0.0636 & 0.0160 & 0.1501 & 0.1431 & 0.0853 \\
Transolver \citep{wu2024transolver} & \cellcolor{ranktwo}0.0748 & \cellcolor{ranktwo}0.0379 & \cellcolor{ranktwo}0.0091 & 0.1107 & 0.1016 & 0.0439 \\
Transolver++ \citep{luotransolver++} & 0.0784 & 0.0390 & 0.0122 & 0.1318 & 0.1048 & 0.0445 \\
LaMO \citep{tiwarilatent} & 0.0786 & 0.0400 & 0.0099 & 0.1234 & 0.1094 & 0.0438 \\
LinearNO \citep{hu2025transolver} & 0.0800 & 0.0390 & 0.0095 & \cellcolor{ranktwo}0.1079 & \cellcolor{ranktwo}0.0972 & 0.0448 \\ 
\midrule

\textbf{Advectra (Ours)} 
& \cellcolor{rankone}\textbf{0.0660} 
& \cellcolor{rankone}\textbf{0.0331} 
& \cellcolor{rankone}\textbf{0.0057} 
& \cellcolor{rankone}\textbf{0.0862} 
& \cellcolor{rankone}\textbf{0.0863} 
& \cellcolor{rankone}\textbf{0.0372} \\ 

\bottomrule
\end{tabular}%
}
\end{sc}
\end{table}

\section{Additional Analysis of Transport Behavior}
\label{app:transport_analysis}

To further characterize the transport properties of Advectra, we analyze the learned operator through moment-based diagnostics that isolate conservation, advection, and structural fidelity.

\subsection{Moment-Based Transport Diagnostics}

Let $c(x)$ denote a scalar tracer field over spatial domain $\Omega$. We evaluate transport behavior through the following quantities:

\paragraph{Mass (0\textsuperscript{th} moment).}
\[
M = \int_{\Omega} c(x)\,dx, \quad
\mathcal{E}_{\text{mass}} = \frac{|M^{\text{pred}} - M^{\text{gt}}|}{M^{\text{gt}}}.
\]
This measures conservation of the transported quantity.

\paragraph{Center of Mass (1\textsuperscript{st} moment).}
\[
\mu = \frac{\int_{\Omega} x\,c(x)\,dx}{\int_{\Omega} c(x)\,dx}, \quad
\mathcal{E}_{\text{COM}} = \|\mu^{\text{pred}} - \mu^{\text{gt}}\|_2.
\]
This captures the integrated displacement induced by advection.

\paragraph{Pointwise Error.}
\[
\mathcal{E}_{\text{raw}} = \|c^{\text{pred}} - c^{\text{gt}}\|_1,
\]
which measures structural fidelity of the transported field.

\subsection{COM-Aligned Error}

To separate transport from deformation, we define a COM-aligned error by translating the prediction:
\[
\Delta = \mu^{\text{gt}} - \mu^{\text{pred}}, \quad
\mathcal{E}_{\text{aligned}} = \|c^{\text{pred}}(x - \Delta) - c^{\text{gt}}(x)\|_1.
\]

We report the ratio:
\[
R = \frac{\mathbb{E}[\mathcal{E}_{\text{aligned}}]}{\mathbb{E}[\mathcal{E}_{\text{raw}}]}.
\]

\paragraph{Interpretation.}
Since translation is an isometry under the $L_1$ norm, this ratio isolates structural effects:
\begin{itemize}
\item $R < 1$: a global COM shift reduces error, suggesting bulk misalignment contributes to the error,
\item $R \approx 1$: global translation provides little correction, suggesting residual error is not dominated by a missing bulk shift,
\item $R > 1$: global alignment worsens error, suggesting spatially nonuniform deformation, diffusion, or local misalignment.
\end{itemize}

\subsection{Results}

\begin{table}[h]
\centering
\caption{Transport diagnostics on NS-Tracer-PwC.}
\label{tab:COM}
\begin{tabular}{l|cccc}
\toprule
Model & COM $\downarrow$ & Mass $\downarrow$ & Raw MAE $\downarrow$ & Ratio $R$ $\downarrow$ \\
\midrule
Advectra & \textbf{0.0043} & \textbf{0.67\%} & \textbf{0.0285} & \textbf{1.0099} \\
Advectra (No Transport) & 0.0071 & 0.93\% & 0.0419 & 1.0656 \\
LaMO & 0.0072 & 1.10\% & 0.0397 & 1.0973 \\
\bottomrule
\end{tabular}
\end{table}

Advectra achieves a $\sim 40\%$ relative reduction in COM error compared to both LaMO and the no-transport variant, indicating more accurate first-moment alignment of the predicted tracer. It also achieves lower mass error and lower raw MAE, showing better preservation of tracer mass and field-level tracer structure.


Advectra exhibits a ratio \(R\approx 1.01\), meaning that a global COM correction changes its error by only \(\sim 1\%\). Thus, its residual error is not dominated by a missing bulk translation. In contrast, the no-transport variant has \(R=1.0656\), meaning that its COM-aligned error is \(6.5\%\) higher than its own raw error; LaMO has \(R=1.0973\), corresponding to a \(9.7\%\) increase relative to its own raw error. In absolute terms, the COM-aligned error increases from approximately \(0.0288\) for Advectra to \(0.0446\) for the no-transport variant and \(0.0436\) for LaMO. These increases under global alignment indicate that the discrepancies are not simple translations of the ground truth, but are more consistent with spatially nonuniform distortions such as excessive diffusion, incorrect stretching, or loss of filamentary detail.

Together, these diagnostics provide direct evidence that explicit latent transport improves both advective alignment and structure preservation in transport-dominated dynamics: Advectra better matches global displacement, total tracer mass, and local tracer profiles without relying on a post-hoc global shift.

\subsection{Qualitative Evidence}
Figure~\ref{fig:com_aligned_errors} further supports the diagnostic results. The COM-aligned error comparison shows that Advectra remains nearly unchanged after alignment, while both baselines exhibit noticeable degradation. The 1D slice visualizations in Figure~\ref{fig:1d-slices} show that Advectra better preserves the peak amplitude, spatial alignment, and sharpness of tracer profiles, whereas the baselines exhibit broader profiles, shifted centers, and stronger structural distortion.

\begin{figure}[htbp]
    \centering
    \begin{subfigure}[b]{\textwidth}
        \centering
        \includegraphics[width=0.8\textwidth, height=0.25\textheight, keepaspectratio]{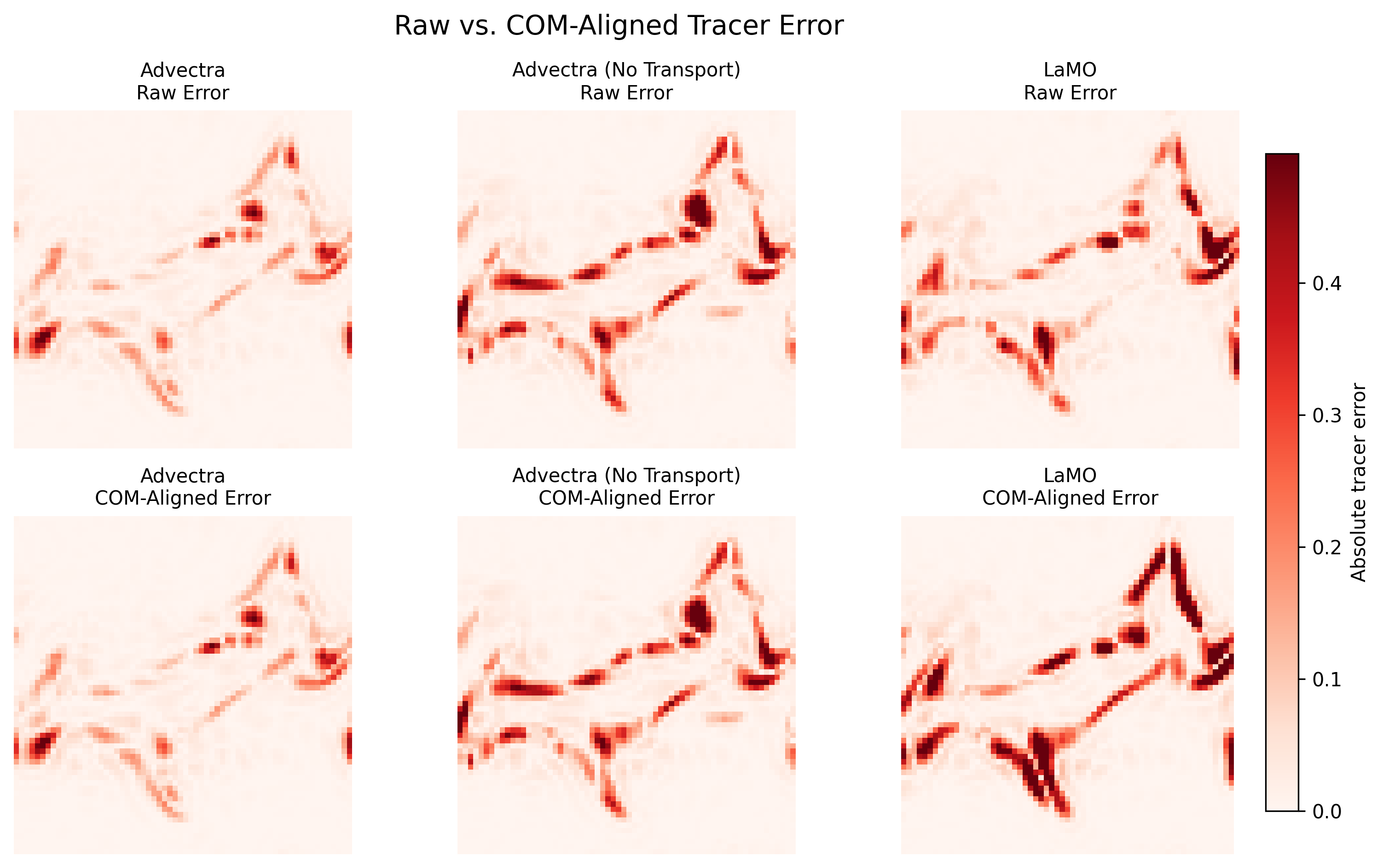}
        \caption{Sample Trajectory 1}
        \label{fig:aligned_error_0}
    \end{subfigure}
    
    \vspace{1em} 
    
    \begin{subfigure}[b]{\textwidth}
        \centering
        \includegraphics[width=0.8\textwidth, height=0.25\textheight, keepaspectratio]{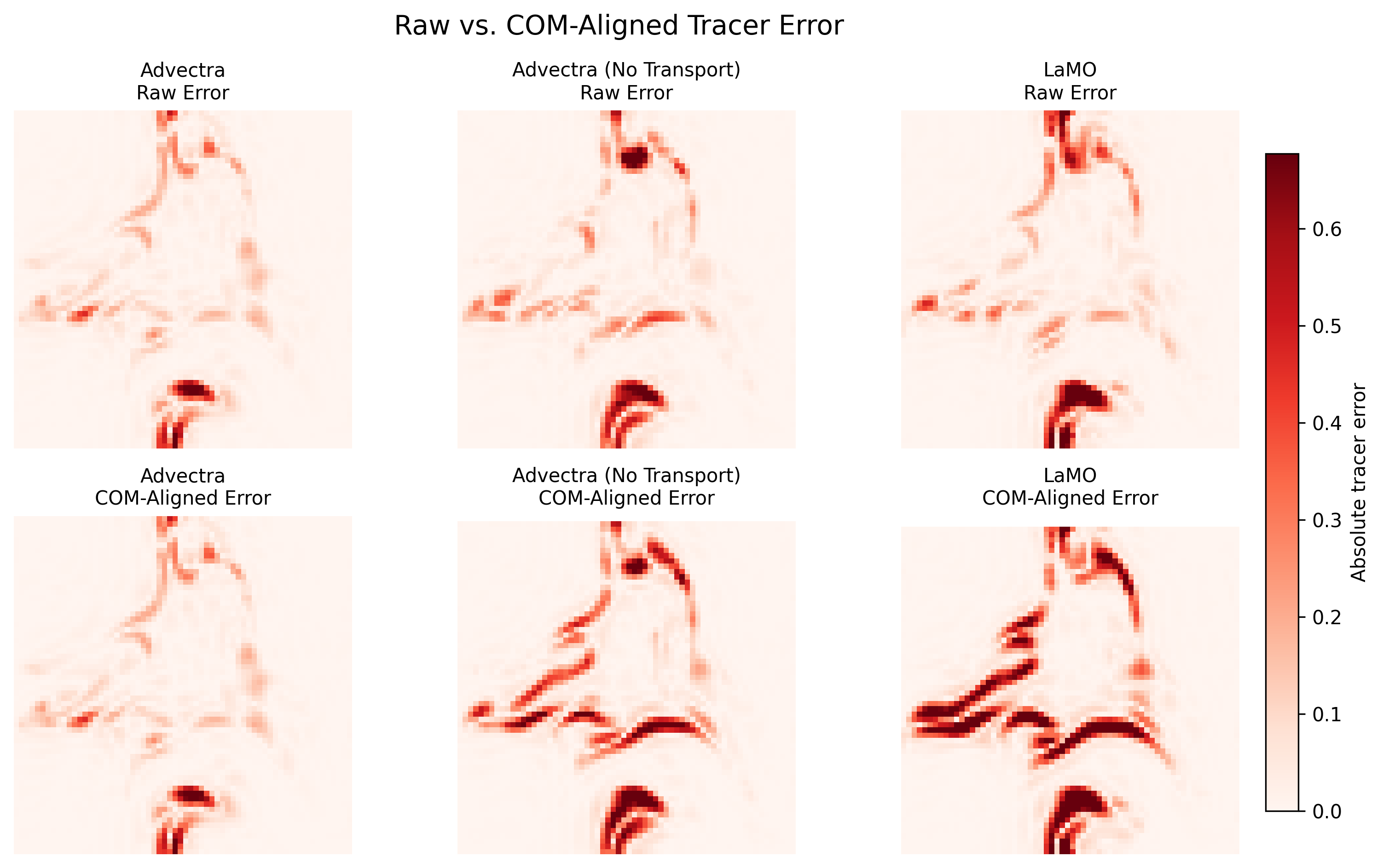}
        \caption{Sample Trajectory 2}
        \label{fig:aligned_error_1}
    \end{subfigure}
    
    \vspace{1em} 
    
    \begin{subfigure}[b]{\textwidth}
        \centering
        \includegraphics[width=0.8\textwidth, height=0.25\textheight, keepaspectratio]{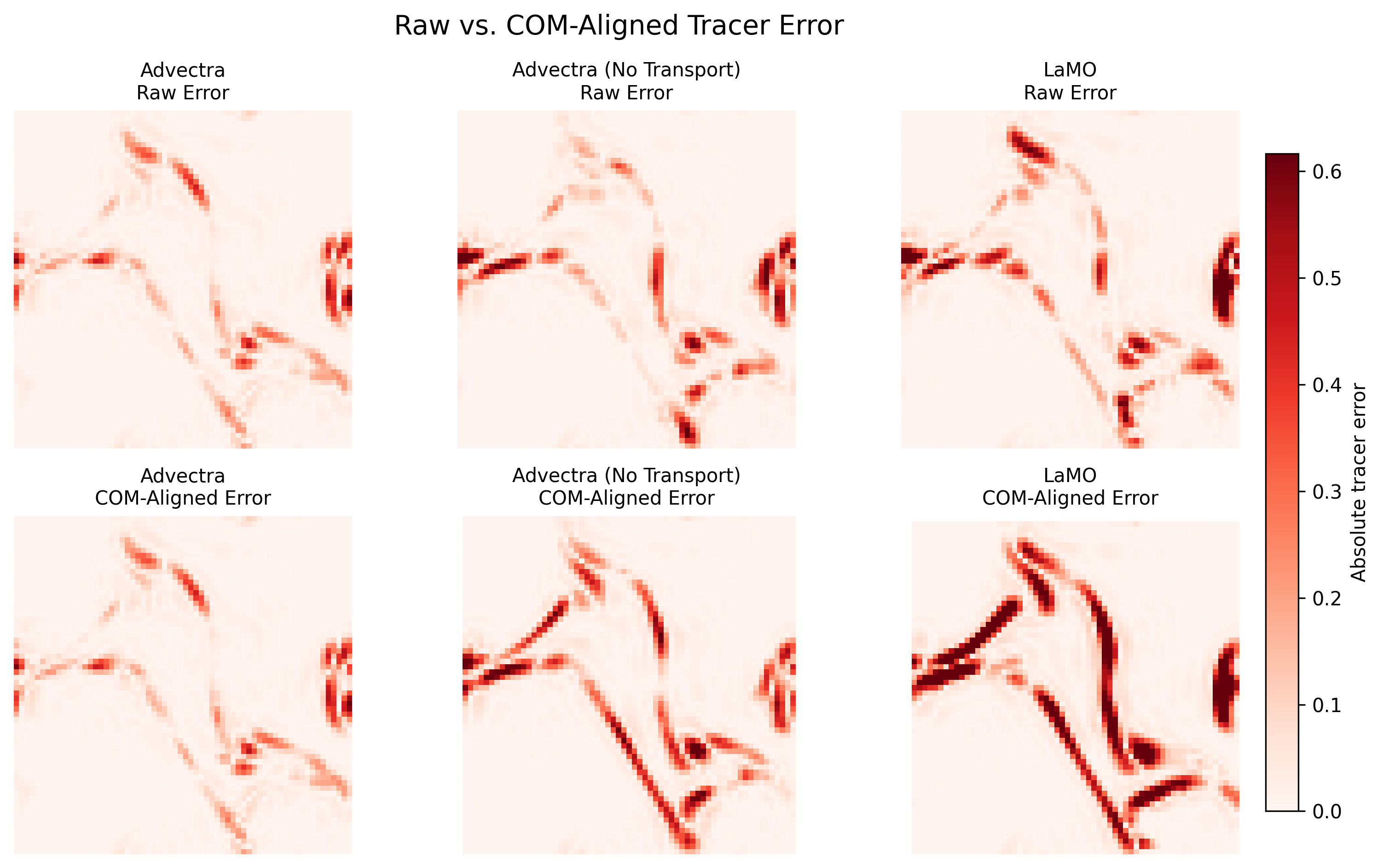}
        \caption{Sample Trajectory 3}
        \label{fig:aligned_error_2}
    \end{subfigure}
    
    \caption{Comparison of Raw Error versus Center-of-Mass (COM) Aligned Error across three randomly selected test trajectories. By translating the predicted tracer distribution by the displacement vector $d=\mu_{\mathrm{gt}}-\mu_{\mathrm{pred}}$ before computing pointwise error, this diagnostic separates bulk COM misalignment from residual structural mismatch. Higher COM-aligned error indicates deformation, diffusion, or local misalignment that cannot be corrected by a global translation, while lower COM-aligned error indicates better preservation of tracer structure under this diagnostic.}
    \label{fig:com_aligned_errors}
\end{figure}

\begin{figure}[htbp]
    \centering
    \begin{subfigure}[b]{\textwidth}
        \centering
        \includegraphics[width=0.8\textwidth, height=0.25\textheight, keepaspectratio]{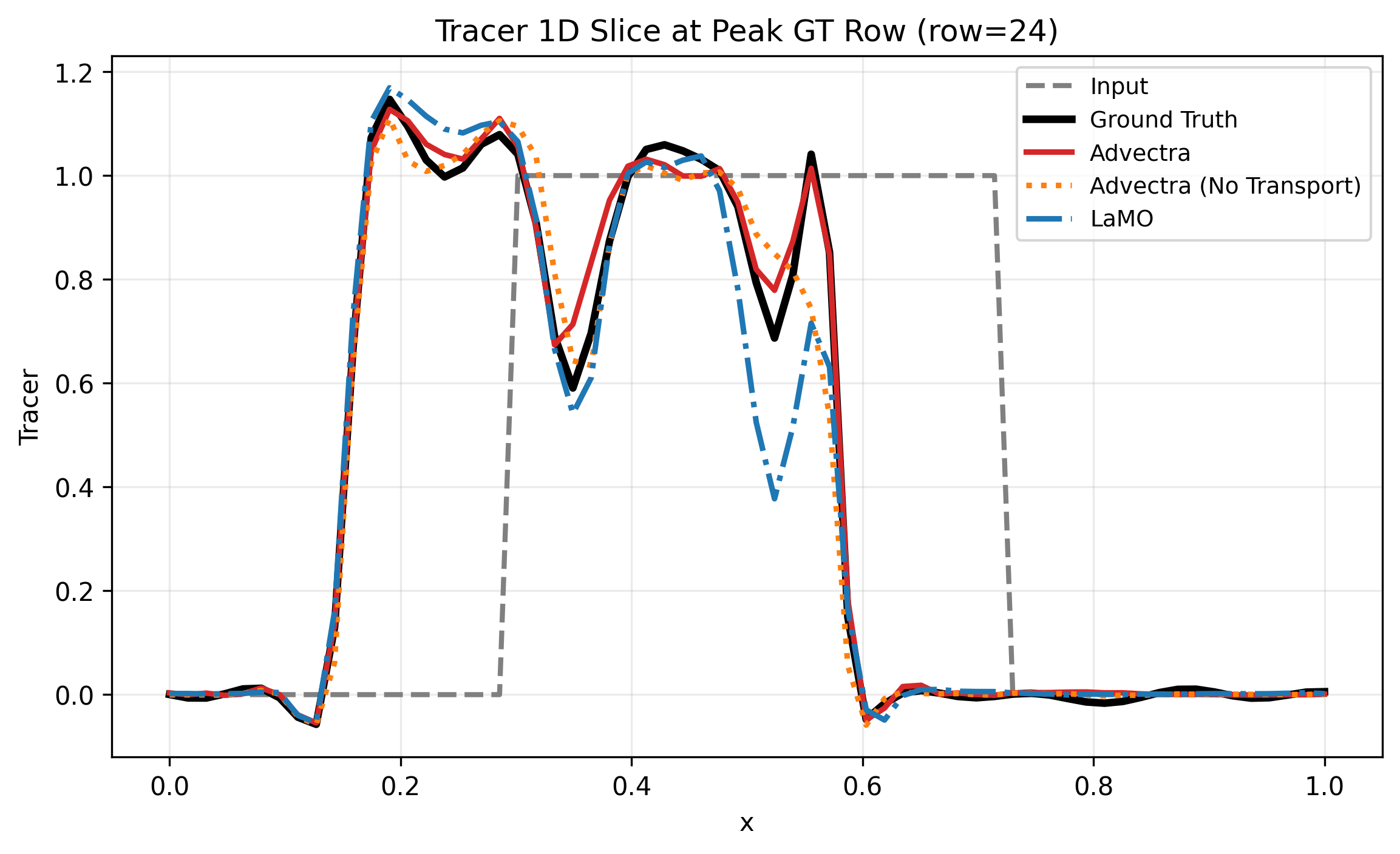}
        \caption{Sample Trajectory 1}
        \label{fig:slice_1d_0}
    \end{subfigure}
    
    \vspace{1em} 
    
    \begin{subfigure}[b]{\textwidth}
        \centering
        \includegraphics[width=0.8\textwidth, height=0.25\textheight, keepaspectratio]{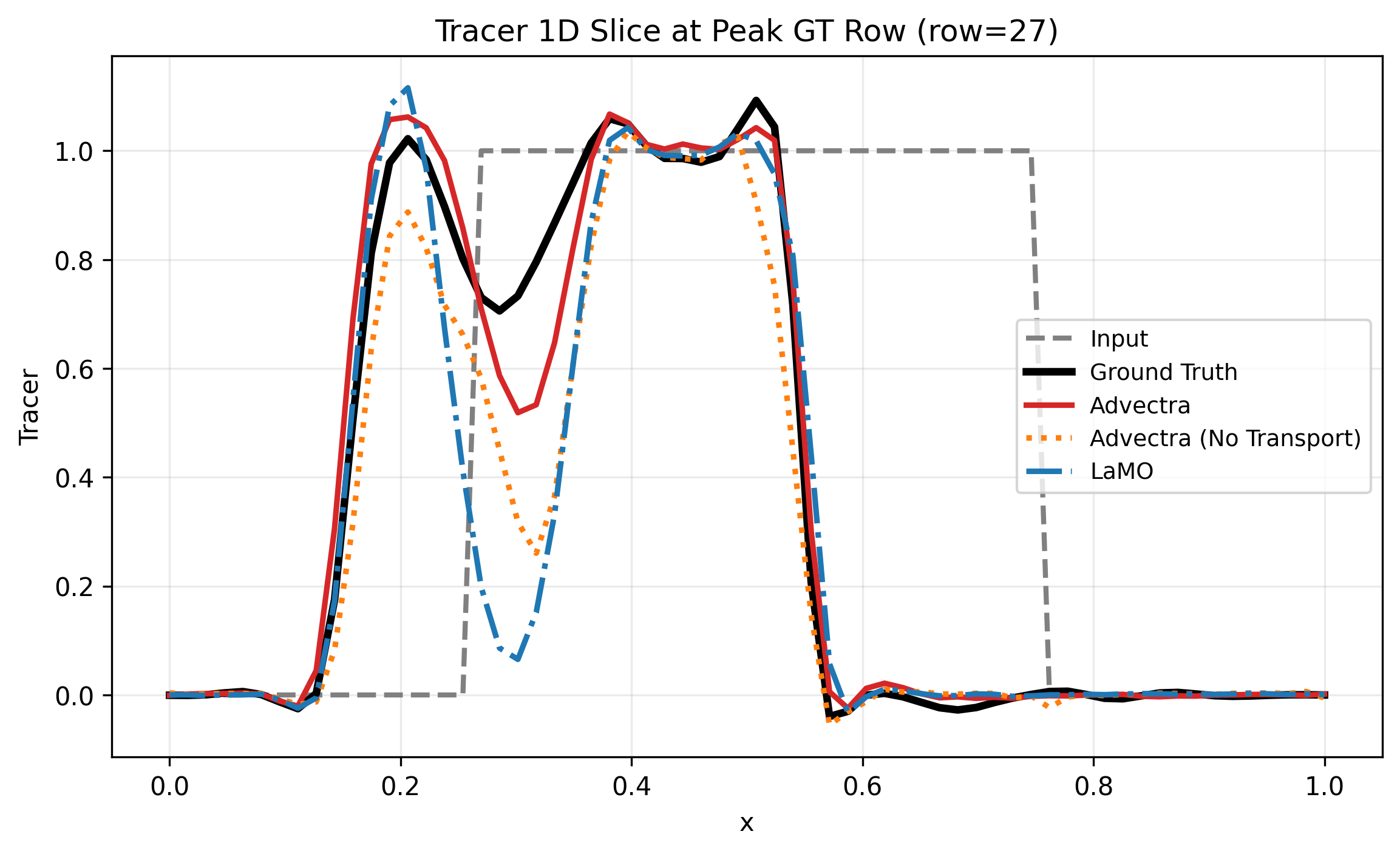}
        \caption{Sample Trajectory 2}
        \label{fig:slice_1d_1}
    \end{subfigure}
    
    \vspace{1em} 
    
    \begin{subfigure}[b]{\textwidth}
        \centering
        \includegraphics[width=0.8\textwidth, height=0.25\textheight, keepaspectratio]{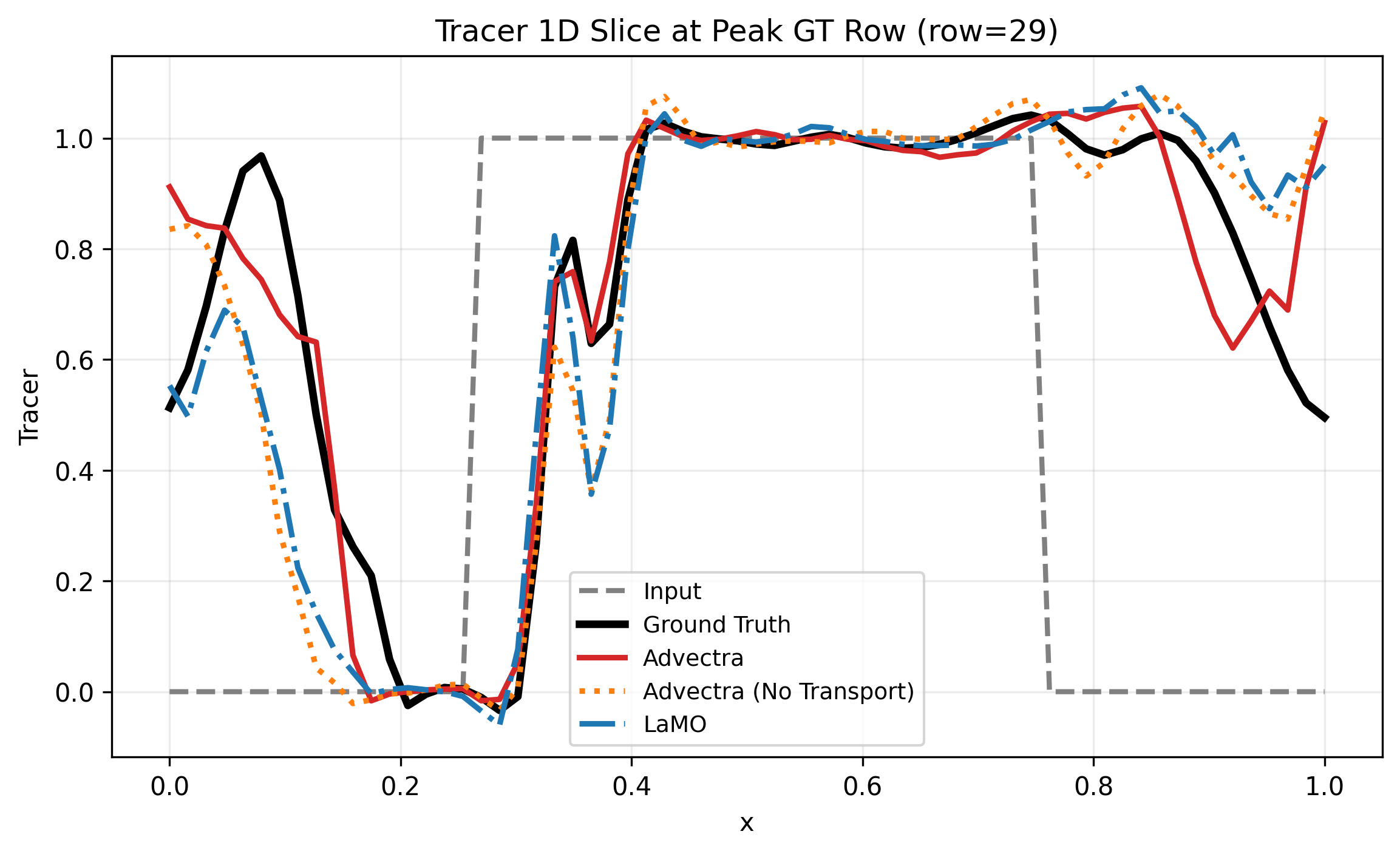}
        \caption{Sample Trajectory 3}
        \label{fig:slice_1d_2}
    \end{subfigure}
    
    \caption{1D horizontal cross-sections of the tracer concentration taken at the spatial peak for three randomly selected test trajectories. Advectra produces profiles that more closely match the ground truth in peak amplitude and spatial alignment. In contrast, the baseline models exhibit lower peak amplitudes, broader spatial spreads, and shifted centers, indicating stronger diffusion and advection lag.}
    \label{fig:1d-slices}
\end{figure}

\begin{figure}[htbp]
    \centering
    \begin{subfigure}[b]{\textwidth}
        \centering
        \includegraphics[width=0.8\textwidth, height=0.25\textheight, keepaspectratio]{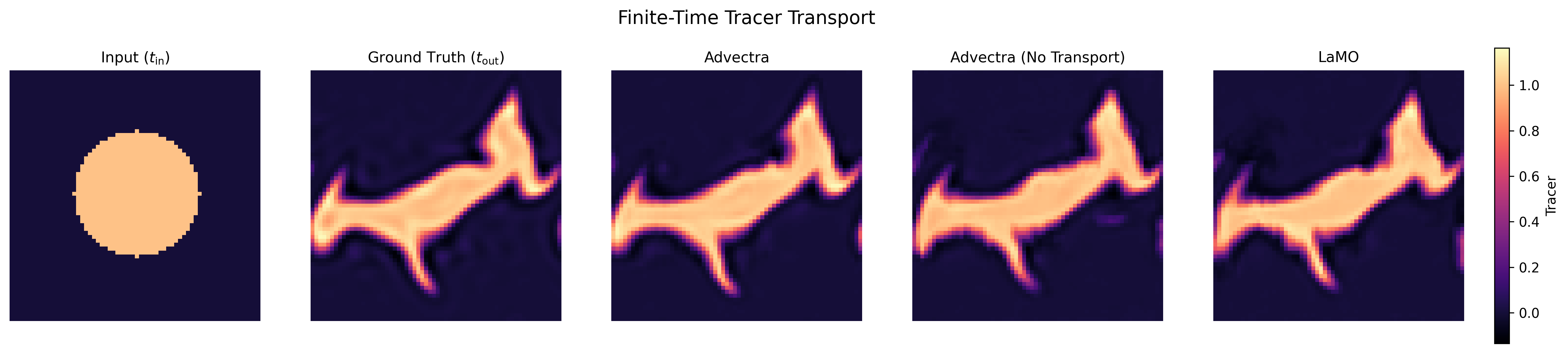}
        \caption{Sample Trajectory 1}
        \label{fig:transport_panel_0}
    \end{subfigure}
    
    \vspace{1em} 
    
    \begin{subfigure}[b]{\textwidth}
        \centering
        \includegraphics[width=0.8\textwidth, height=0.25\textheight, keepaspectratio]{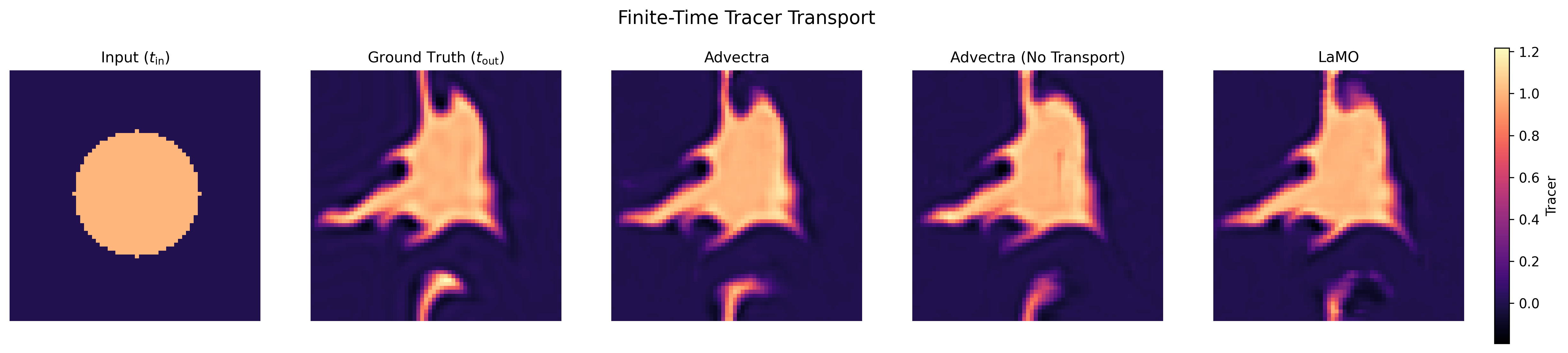}
        \caption{Sample Trajectory 2}
        \label{fig:transport_panel_1}
    \end{subfigure}
    
    \vspace{1em} 
    
    \begin{subfigure}[b]{\textwidth}
        \centering
        \includegraphics[width=0.8\textwidth, height=0.25\textheight, keepaspectratio]{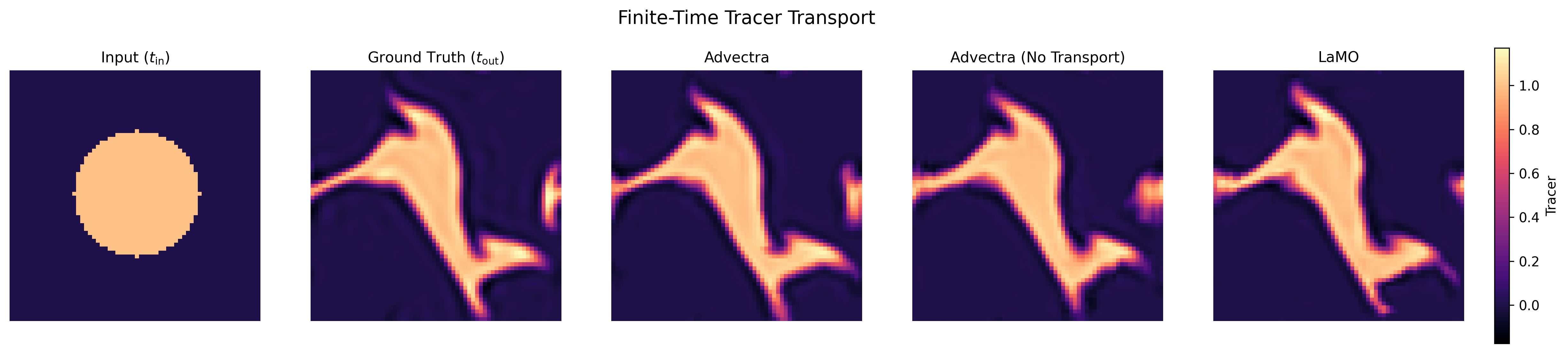}
        \caption{Sample Trajectory 3}
        \label{fig:transport_panel_2}
    \end{subfigure}
    
    \caption{Qualitative comparison of predicted tracer transport across three randomly selected test trajectories. Each panel contrasts the Ground Truth state at $t_{out}$ against the predictions generated by the proposed Advectra architecture, the ablated Advectra (No Transport) variant, and the LaMO baseline. The visualizations illustrate the differing spatial distributions and structural deformations produced by each operator learning formulation.}
    \label{fig:transport-panels}
\end{figure}

\begin{figure}[htbp]
    \centering
    \includegraphics[width=0.75\textwidth, height=0.35\textheight, keepaspectratio]{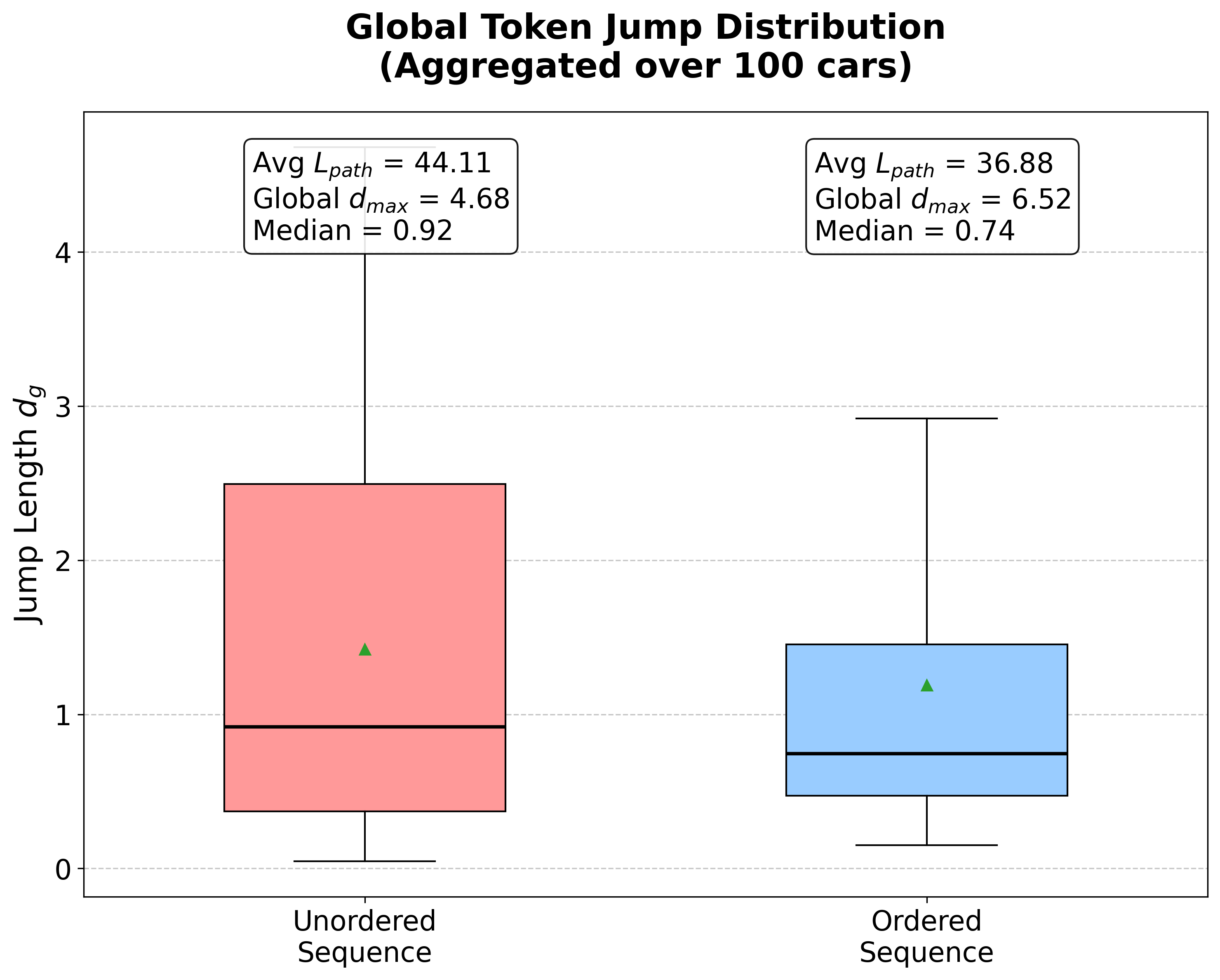}
    \caption{Distribution of consecutive centroid distances for randomly selected test samples. Geometry-aware ordering reduces both the average and median jump distance, indicating improved locality. At the same time, it preserves a small number of larger jumps, corresponding to structured long-range transitions necessary to capture global geometry.}
    \label{fig:jump_distribution}
\end{figure}

\section{Geometric Analysis of Latent Ordering}
\label{sec:ordering_analysis}

To understand the role of geometry-aware ordering in Advectra, we analyze how the latent sequence aligns with the underlying physical geometry. In particular, we examine whether sorting encourages spatial coherence, which is important for effective sequential state-space modeling.

\subsection{Latent Paths in Physical Space}

We first visualize how the latent sequence traverses the physical domain. Each latent slice is mapped to its corresponding centroid in physical space, and the ordering induces a path connecting these centroids.

\begin{figure}[htbp]
    \centering
    \begin{subfigure}[b]{\textwidth}
        \centering
        \includegraphics[width=0.85\textwidth, height=0.23\textheight, keepaspectratio]{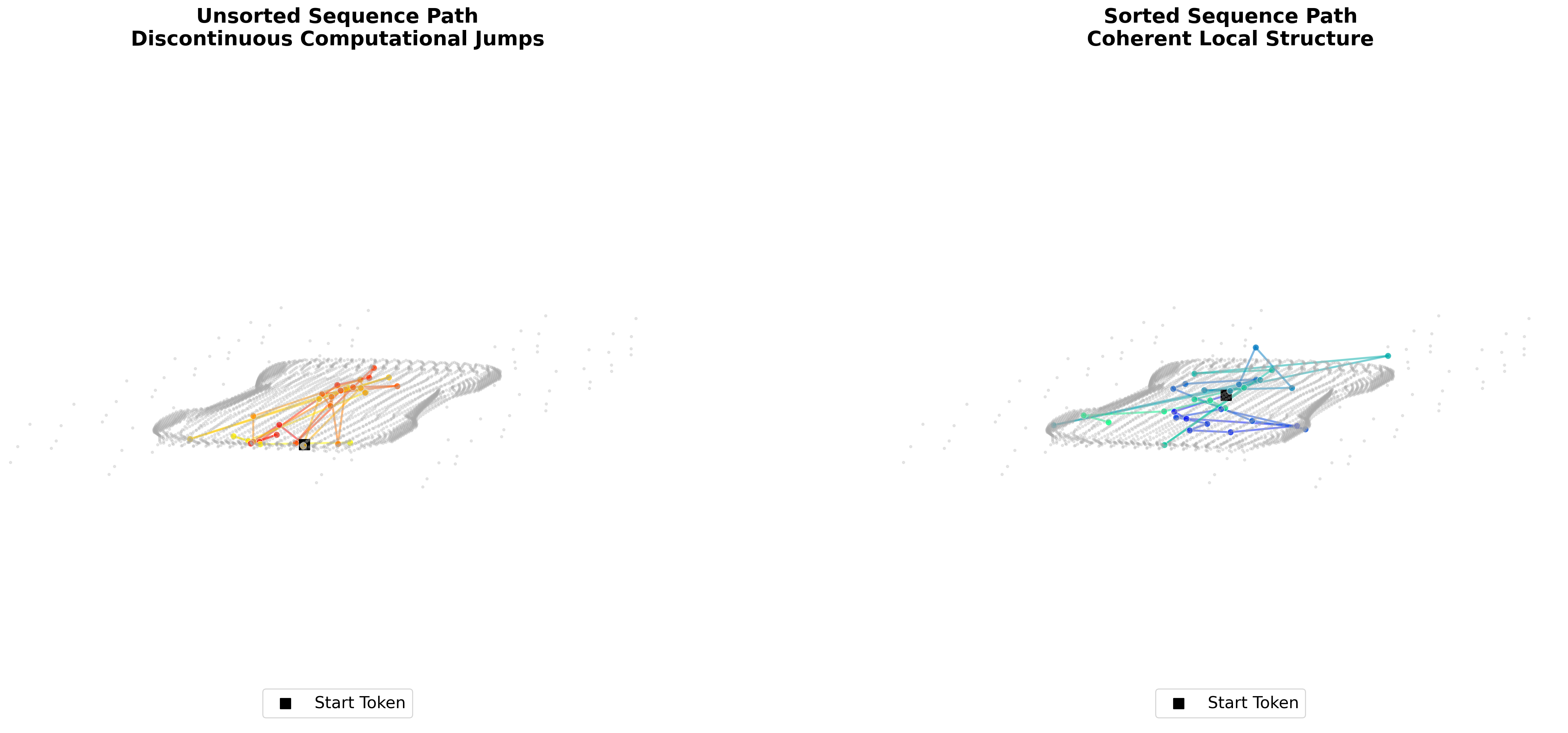}
        \caption{Randomly selected test geometry 1}
        \label{fig:centroid_paths_case_1}
    \end{subfigure}
    
    \vspace{0.6em}
    
    \begin{subfigure}[b]{\textwidth}
        \centering
        \includegraphics[width=0.85\textwidth, height=0.23\textheight, keepaspectratio]{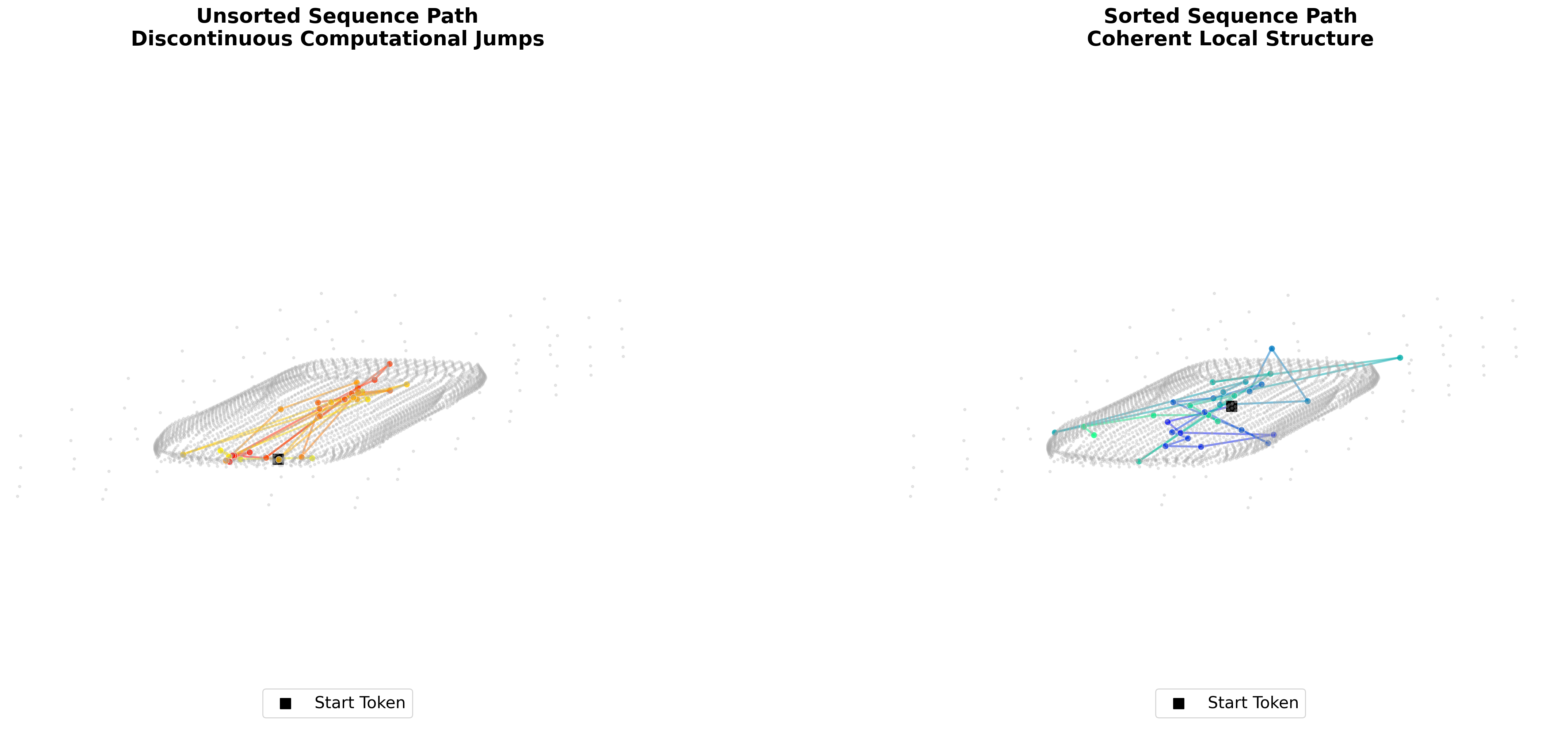}
        \caption{Randomly selected test geometry 2}
        \label{fig:centroid_paths_case_2}
    \end{subfigure}
    
    \vspace{0.6em}
    
    \begin{subfigure}[b]{\textwidth}
        \centering
        \includegraphics[width=0.85\textwidth, height=0.23\textheight, keepaspectratio]{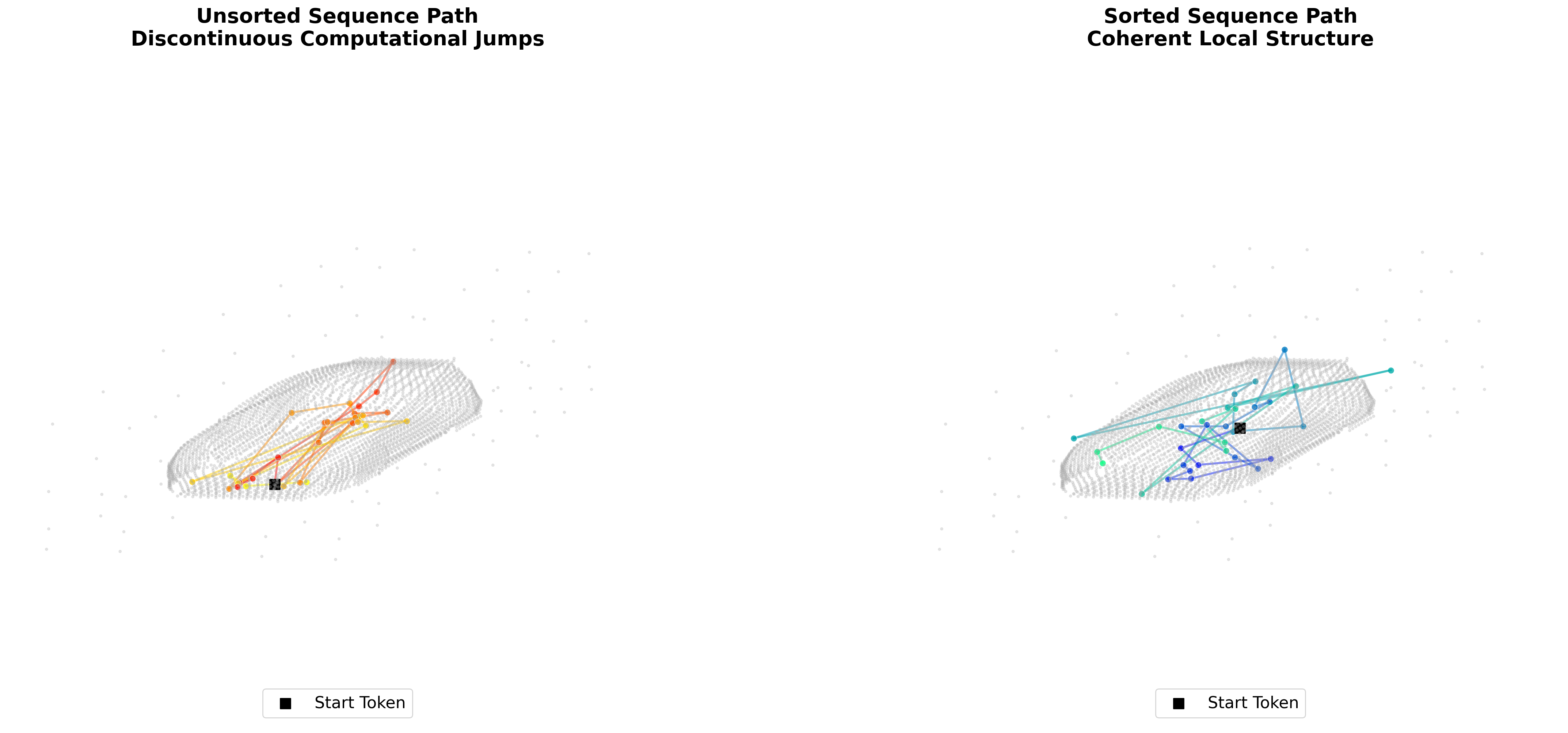}
        \caption{Randomly selected test geometry 3}
        \label{fig:centroid_paths_case_3}
    \end{subfigure}
    
    \caption{Latent sequence paths mapped to physical space for three randomly selected ShapeNet-Car test geometries. Without ordering, the sequence exhibits frequent long-range jumps between spatially discontinuous regions (e.g., front and rear of the geometry). With geometry-aware ordering, the paths become smoother and spatially coherent, traversing the domain through predominantly local transitions with only occasional long-range connections.}
    \label{fig:centroid_paths}
\end{figure}

As shown in Figure~\ref{fig:centroid_paths}, the unordered latent sequence exhibits frequent long-range jumps between spatially disconnected regions. This indicates that consecutive tokens correspond to unrelated physical locations, leading to incoherent interactions when processed sequentially.

In contrast, geometry-aware ordering produces geometrically consistent paths that traverse the domain smoothly. Most transitions occur locally between neighboring regions, while a small number of long-range jumps are retained to connect distant but physically meaningful regions.

\subsection{Distribution of Jump Distances}

To quantify this behavior, we measure the Euclidean distance between consecutive latent centroids along the sequence.

Figure~\ref{fig:jump_distribution} shows that geometry-aware ordering reduces the average path length from $44.11$ to $36.88$ (approximately $16.4\%$ reduction) and the median jump distance from $0.92$ to $0.74$, indicating that most adjacent transitions become more localized.

At the same time, the maximum jump distance increases from $4.68$ to $6.52$. This shows that a small number of long-range transitions remain after ordering. Rather than eliminating global interactions, the ordering concentrates them into sparse, structured connections while improving locality for most adjacent token pairs.

\paragraph{Interpretation.}
These results indicate that geometry-aware ordering reorganizes the latent sequence into predominantly local interactions, while retaining a small number of structured global transitions. This balance is useful for SSM-based operators because it encourages sequential updates to follow meaningful spatial relationships.

\subsection{Aggregation Field Visualization}

Finally, we visualize the effective aggregation field using quiver plots.

\begin{figure}[htbp]
    \centering
    \begin{subfigure}[b]{\textwidth}
        \centering
        \includegraphics[width=0.85\textwidth, height=0.23\textheight, keepaspectratio]{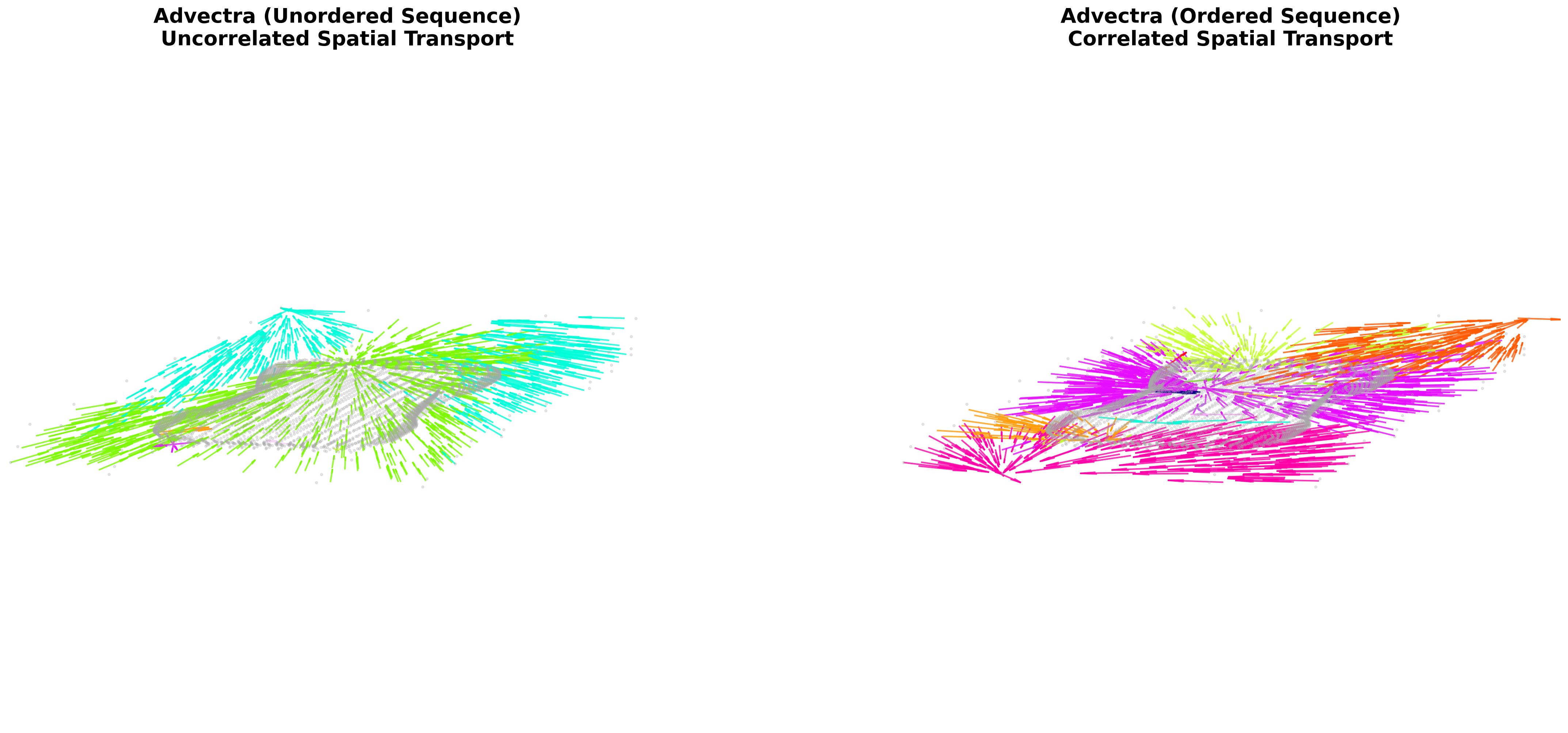}
        \caption{Randomly selected test geometry 1}
        \label{fig:quiver_case_1}
    \end{subfigure}
    
    \vspace{0.6em}
    
    \begin{subfigure}[b]{\textwidth}
        \centering
        \includegraphics[width=0.85\textwidth, height=0.23\textheight, keepaspectratio]{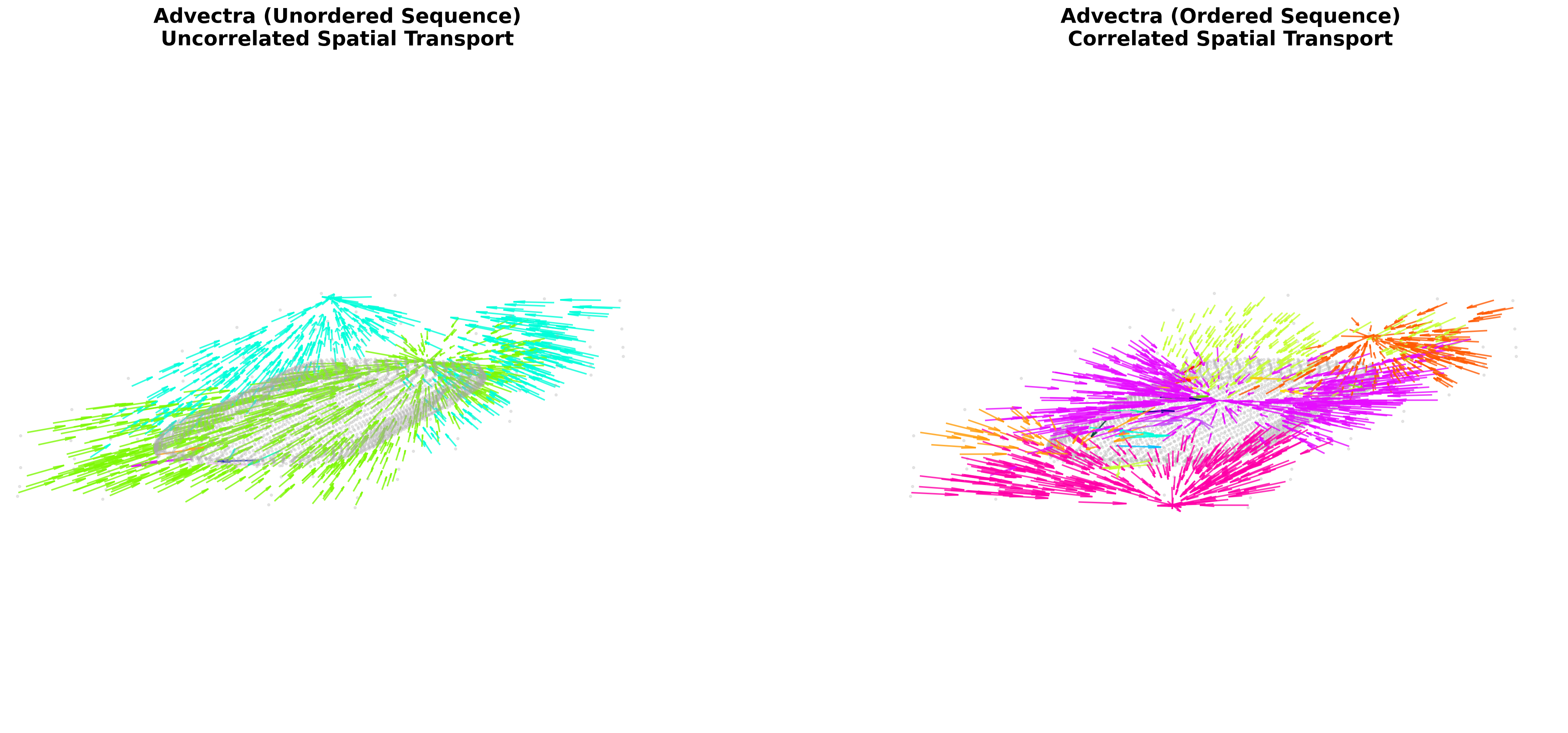}
        \caption{Randomly selected test geometry 2}
        \label{fig:quiver_case_2}
    \end{subfigure}
    
    \vspace{0.6em}
    
    \begin{subfigure}[b]{\textwidth}
        \centering
        \includegraphics[width=0.85\textwidth, height=0.23\textheight, keepaspectratio]{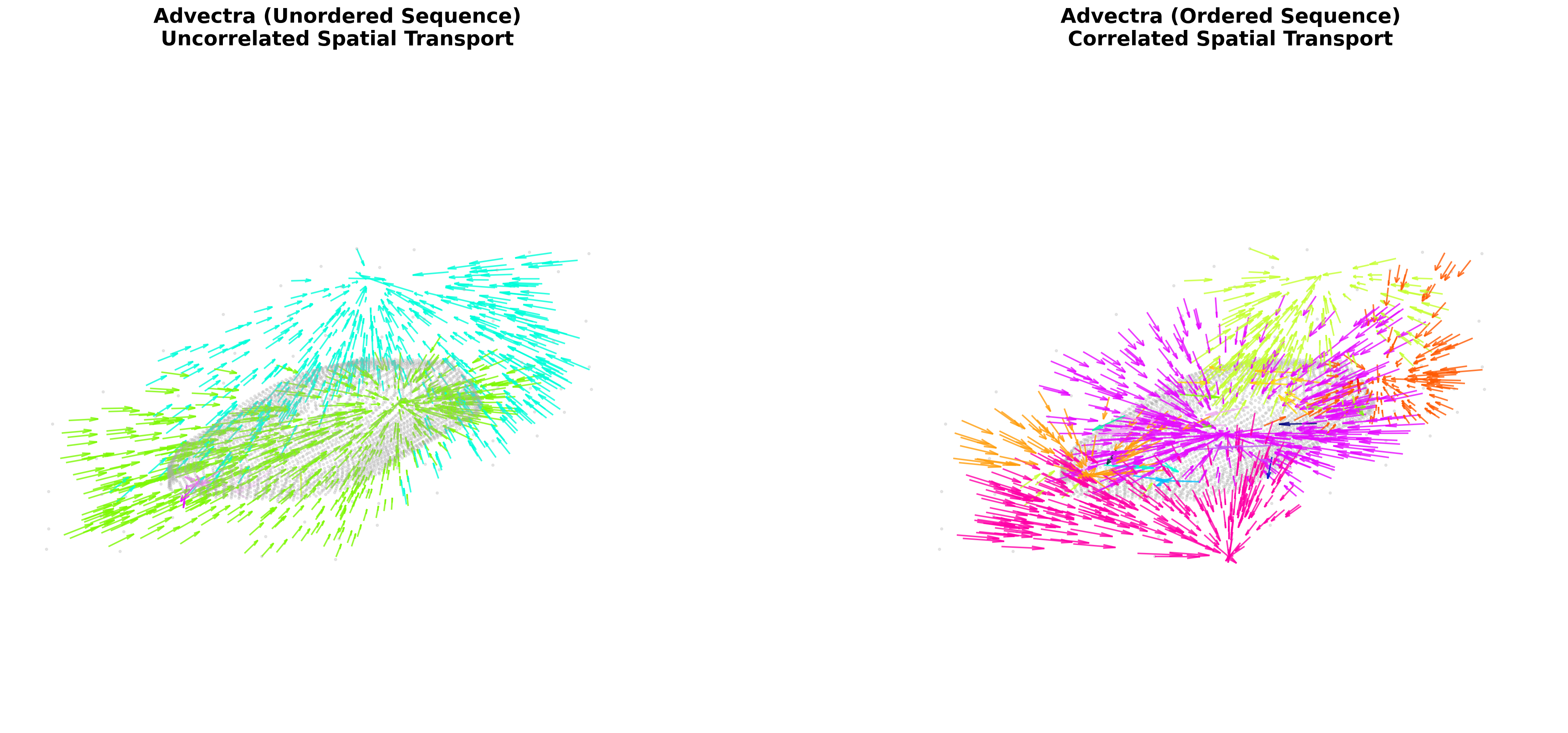}
        \caption{Randomly selected test geometry 3}
        \label{fig:quiver_case_3}
    \end{subfigure}
    
    \caption{Aggregation field visualization for three randomly selected ShapeNet-Car test geometries. Without ordering, the field exhibits long-range, crossing interactions spanning large portions of the domain, indicating spatially incoherent aggregation. With geometry-aware ordering, the field decomposes into localized, spatially coherent clusters with bounded extent, reflecting predominantly local interactions.}
    \label{fig:quiver}
\end{figure}

Without ordering, the aggregation field exhibits degenerate behavior, where a small number of dominant directions span large regions and heavily cross each other. This indicates that features are aggregated from distant and unrelated locations.

With geometry-aware ordering, the field becomes structured and localized. The vectors form coherent clusters with bounded spatial extent, indicating that information is primarily aggregated from nearby regions.

\paragraph{Summary.}
Together, these results show that geometry-aware ordering aligns the latent sequence more closely with the underlying physical geometry. This alignment helps sequential updates correspond to meaningful spatial interactions rather than arbitrary mixing across distant regions.

\clearpage
\section{Effect of Ordering Frequency}
\label{sec:sorting_frequency}

We investigate the sensitivity of Advectra to the frequency of geometry-aware ordering on the \textbf{ShapeNet-Car} dataset, a challenging 3D aerodynamic benchmark where preserving spatial structure under latent compression is particularly critical. While ordering enforces geometric coherence in the latent sequence, it introduces additional computational overhead. A key question is therefore whether frequent reordering is necessary, or whether occasional updates suffice.

\begin{table}[H]
\centering
\caption{Effect of ordering frequency on the \textbf{ShapeNet-Car} dataset. We report relative $L_2$ errors on volume and surface fields. Geometry-aware ordering significantly improves performance compared to no ordering, while results remain stable across a wide range of sorting intervals (i.e., refresh intervals of the nearest-neighbor latent ordering).}
\label{tab:sorting_frequency}
\begin{sc}
\resizebox{0.5\textwidth}{!}{%
\begin{tabular}{c|cc}
\toprule
\textbf{Sorting Interval} & \multicolumn{2}{c}{\textbf{ShapeNet-Car}} \\ \cmidrule(lr){2-3}
 & Volume $\downarrow$ & Surf $\downarrow$ \\ \midrule
25  & 0.0158 & 0.0736 \\
50  & 0.0150 & 0.0719\\
100 & \textbf{0.0149} & \textbf{0.0716}  \\
200 & 0.0153 & \textbf{0.0716}  \\
400 & 0.0156 & 0.0718  \\ \midrule
No Sorting & 0.0169 & 0.0766  \\
\bottomrule
\end{tabular}
}
\end{sc}
\end{table}

Table~\ref{tab:sorting_frequency} reports performance across a range of sorting intervals (i.e., the refresh interval of the geometry-aware nearest-neighbor ordering), together with a no-ordering baseline. Enabling ordering leads to a clear improvement over the unordered variant, reducing surface error from 0.0766 to approximately 0.0716 and volume error from 0.0169 to 0.0149. This shows that geometrically coherent latent ordering is important for accurate modeling on this complex 3D geometry benchmark.

Once ordering is introduced, the model is robust to the exact refresh frequency. Varying the sorting interval from 25 to 400 results in only minor performance fluctuations. The best results are obtained around an interval of 100, but all ordered configurations achieve comparable accuracy.

\paragraph{Interpretation.}
These results indicate that ordering primarily acts as a periodic geometric regularization mechanism rather than a constraint that must be enforced at every optimization step. Once a coherent latent ordering is established, the model is able to preserve and refine this structure over multiple updates without requiring frequent reordering. Excessively frequent reordering may introduce unnecessary perturbations to the latent representation, while infrequent reordering remains sufficient to maintain global coherence.

\paragraph{Practical Implications.}
This behavior is favorable for 3D geometric operator learning, where ordering over latent anchors adds overhead but need not be performed at every optimization step. A moderate refresh interval, such as every 100 iterations in our experiments, preserves the accuracy gains of ordering while avoiding unnecessary reordering. More broadly, the weak sensitivity across a wide range of intervals indicates that Advectra's ordering mechanism behaves as a stable geometric prior rather than a fragile hyperparameter, making it practical for large-scale geometric problems.

\section{Additional Analysis of Geometric Ordering}
\label{app:ordering_additional}

\subsection{Robustness to Ordering Initialization}
\label{app:ordering_initialization}

We first evaluate whether the geometry-aware latent ordering is sensitive to the initial point of the greedy nearest-neighbor traversal. This experiment is conducted on ShapeNet-Car. To isolate the effect of ordering initialization, we fix the training seed and vary only the initial anchor used by the traversal.

Across eight different initial anchors, the greedy nearest-neighbor ordering achieves \(0.0148 \pm 0.0005\) volume error and \(0.0717 \pm 0.0005\) surface error. These values are close to the default reported result of \(0.0149\) volume error and \(0.0716\) surface error. Thus, the benefit of geometric ordering is not driven by a particular starting anchor, and the traversal remains stable under different initialization choices.

\subsection{Comparison with Alternative Traversals}
\label{app:ordering_alternatives}

We next compare the default greedy nearest-neighbor traversal with alternative latent ordering strategies. All methods are evaluated on ShapeNet-Car using the transported anchor geometry.

\begin{table}[H]
\centering
\caption{Effect of latent ordering strategy on the \textbf{ShapeNet-Car} dataset. We report relative \(L_2\) errors on volume and surface fields. Greedy nearest-neighbor ordering performs best among the tested traversal strategies.}
\label{tab:ordering_strategy}
\begin{sc}
\resizebox{0.58\textwidth}{!}{%
\begin{tabular}{l|cc}
\toprule
\textbf{Ordering Strategy} & \multicolumn{2}{c}{\textbf{ShapeNet-Car}} \\
\cmidrule(lr){2-3}
 & Volume $\downarrow$ & Surf $\downarrow$ \\
\midrule
Greedy nearest-neighbor & \textbf{0.0149} & \textbf{0.0716} \\
Random-walk traversal & 0.0159 & 0.0734 \\
MST traversal & 0.0166 & 0.0743 \\
\midrule
No sorting & 0.0169 & 0.0766 \\
\bottomrule
\end{tabular}
}
\end{sc}
\end{table}

The MST variant constructs a minimum spanning tree over transported anchors and then linearizes the tree into a sequence. Although the MST gives a short global connection graph, it is not itself a sequence. When linearized, the traversal can move between different branches of the tree, so consecutive sequence positions may correspond to anchors that are not locally adjacent in the original tree geometry. This mismatch is important for an SSM backbone, where adjacent tokens define the local direction along which sequential updates are applied.

The random-walk variant constructs an auxiliary graph over transported anchors and uses multiple sampled random-walk passes, following the spirit of Graph Mamba-style random-walk tokenization~\citep{behrouz2024graph}. This provides a graph-based traversal alternative, but it is less directly matched to Advectra's setting. In Graph Mamba, the graph is native: tokens are graph nodes or sampled subgraphs. In Advectra, tokens are learned latent anchors produced by source projection, so the graph used for random walks must first be constructed over the learned anchor set. Multiple sampled walks also introduce additional ordering overhead and may average over several local traversal directions rather than producing one deterministic spatially coherent path. Empirically, random-walk traversal improves over no sorting but remains less accurate than the default greedy nearest-neighbor ordering.

Overall, these results support the use of greedy nearest-neighbor ordering as a simple deterministic default: it directly constructs a locally coherent path over transported anchors, is robust to the initial anchor, and avoids the additional graph traversal overhead of MST or random-walk alternatives.

\section{Out-of-Distribution Generalization}
\label{sec:ood_results}

We evaluate the extrapolation capabilities of different models on the AirfRANS benchmark under two out-of-distribution (OOD) settings: unseen Reynolds numbers and unseen angles of attack. Results are reported in Table~\ref{tab:ood_AirfRANS}.

\textbf{Robust Generalization Across Flow Regimes.}
OOD Reynolds extrapolation requires models to generalize across different flow regimes, where both the scale and structure of the solution can change significantly. In this setting, Advectra achieves the lowest error (0.1807) and the highest rank correlation (0.9970), outperforming the evaluated baselines on this split. Strong neural operator baselines such as LaMO (0.2713) and Transolver (0.2996) exhibit noticeably larger errors, while several graph-based methods degrade substantially. This suggests that Advectra learns features that transfer across these Reynolds-number shifts, rather than relying only on regime-specific spatial correlations.

\textbf{Generalization to Unseen Geometric Conditions.}
For OOD angles of attack, which test generalization across boundary conditions and geometry-dependent flow behavior, Advectra again achieves the best performance, with the lowest error ($0.0932$) and highest rank correlation ($0.9975$). Compared to the strongest baselines (LinearNO and SpiderSolver), the improvement is consistent across both metrics, indicating stable behavior under distribution shift.

\textbf{Implications for Transport Modeling.}
These results highlight the role of transport-aware representations in OOD generalization. Extrapolation across Reynolds numbers and geometric conditions benefits from representations that account for how physical quantities change across regimes, rather than relying on fixed spatial correlations. By combining transported latent representations with a structured latent sequence, Advectra learns a more stable mapping that generalizes beyond the training distribution.

\begin{table}[t]
\centering
\caption{Generalization performance on Out-of-Distribution (OOD) tasks for the AirfRANS dataset. We evaluate extrapolation to unseen Reynolds numbers and angles of attack. $\downarrow$ indicates lower is better; $\uparrow$ indicates higher is better (Spearman's rank correlation). Top-2 results in each column are highlighted: best (orange) and second (blue).}
\label{tab:ood_AirfRANS}
\begin{sc}
\resizebox{0.6\textwidth}{!}{%
\begin{tabular}{l|cc|cc}
\toprule
\multicolumn{1}{c|}{\multirow{2}{*}{\textbf{Model}}} 
& \multicolumn{2}{c|}{\textbf{OOD Reynolds}} 
& \multicolumn{2}{c}{\textbf{OOD Angles}} \\
\multicolumn{1}{c|}{} 
& $C_L$ $\downarrow$ & $\rho_L$ $\uparrow$ 
& $C_L$ $\downarrow$ & $\rho_L$ $\uparrow$ \\ 
\midrule

Simple MLP & 0.6205 & 0.9578 & 0.4128 & 0.9572 \\
GraphSAGE \citep{hamilton2017inductive} & 0.4333 & 0.9707 & 0.2538 & 0.9894 \\
PointNet \citep{qi2017pointnet} & 0.3836 & 0.9806 & 0.4425 & 0.9784 \\
Graph U-Net \citep{gao2019graph} & 0.4664 & 0.9645 & 0.3756 & 0.9816 \\
MeshGraphNet \citep{pfaff2020learning} & 1.7718 & 0.7631 & 0.6525 & 0.8927 \\ 
\midrule

GNO \citep{li2020neural} & 0.4408 & 0.9878 & 0.3038 & 0.9884 \\
Galerkin \citep{cao2021choose} & 0.4615 & 0.9826 & 0.3814 & 0.9821 \\
GNOT \citep{hao2023gnot} & 0.3268 & 0.9865 & 0.3497 & 0.9868 \\
GINO \citep{li2023geometry} & 0.4180 & 0.9645 & 0.2583 & 0.9923 \\
Transolver \citep{wu2024transolver} & 0.2996 & 0.9896 & 0.1500 & 0.9950 \\
Transolver++ \citep{luotransolver++} & 0.2897 & 0.9856 & 0.2269 & 0.9914 \\
LaMO \citep{tiwarilatent} & 0.2713 & 0.9887 & 0.1505 & 0.9958 \\
SpiderSolver \citep{qispidersolver} & \cellcolor{ranktwo}0.2291 & 0.9922 & 0.1062 & 0.9941 \\
LinearNO \citep{hu2025transolver} & 0.2400 & \cellcolor{ranktwo}0.9951 & \cellcolor{ranktwo}0.0987 & \cellcolor{ranktwo}0.9963 \\ 
\midrule

\textbf{Advectra (Ours)} 
& \cellcolor{rankone}\textbf{0.1807}
& \cellcolor{rankone}\textbf{0.9970} 
& \cellcolor{rankone}\textbf{0.0932} 
& \cellcolor{rankone}\textbf{0.9975} \\

\bottomrule
\end{tabular}%
}
\end{sc}
\end{table}

\section{Transported Latent Operator Interpretation}
\label{app:transport_theory}

In this section, we provide a mathematical interpretation of Advectra as a transported finite-dimensional latent operator. The source assignment functions define integral measurements of the input field, the latent sequence model evolves the resulting finite-dimensional coefficients, and the target assignment functions reconstruct the output in a learned target frame.

Unlike a linear integral operator, the full Advectra map need not admit an input-independent kernel, since the latent dynamics include ordering, state-space updates, nonlinear activations, and channel mixing. The exact statement is instead a nonlinear encoder--latent--decoder factorization. This factorization explains the role of asymmetric source and target frames while retaining the full nonlinear architecture used in the main model. We then show that, in special cases, this formulation recovers the familiar finite-rank kernel view and the pushforward form of pure transport.

\subsection{Advectra as a Transported Finite-Dimensional Latent Operator}
\label{app:transport_factorization}

Let $\Omega \subset \mathbb{R}^m$ be a physical domain, let
$u:\Omega \to \mathbb{R}^{d_u}$ denote an input field, and let $\rho$ be a sampling density over $\Omega$.
We denote by
\[
    \{\psi_h\}_{h=1}^G,
    \qquad
    \{\phi_g\}_{g=1}^G
\]
the source and target assignment functions, respectively. The source functions $\psi_h$ aggregate input information, while the target functions $\phi_g$ reconstruct the output in the target coordinate frame.

For simplicity of exposition, we first consider scalar-valued fields. The vector-valued case follows componentwise, with the latent sequence model acting on channel-valued latent tokens. Define the source encoder
\[
    (E_\psi u)_h
    =
    z_h(u)
    =
    \frac{
        \int_{\Omega} \psi_h(x)u(x)\rho(x)\,dx
    }{
        \int_{\Omega} \psi_h(x)\rho(x)\,dx
    },
    \qquad h=1,\ldots,G,
\]
assuming the denominators are nonzero. Let
\[
    F_\theta:\mathbb{R}^{G}\to\mathbb{R}^{G}
\]
denote the latent sequence model, including geometric ordering, SSM updates, nonlinear activations, and channel mixing. Finally, define the target decoder
\[
    (D_\phi a)(y)
    =
    \sum_{g=1}^G \phi_g(y)a_g .
\]

\begin{proposition}[Exact transported latent factorization]
\label{prop:latent_factorization}
For fixed source and target assignments, the Advectra reconstruction can be written exactly as
\[
    \hat u
    =
    D_\phi \circ F_\theta \circ E_\psi (u),
\]
or equivalently,
\[
    \hat u(y)
    =
    \sum_{g=1}^G
    \phi_g(y)
    \left[
        F_\theta(z_1(u),\ldots,z_G(u))
    \right]_g .
\]
Thus, Advectra maps the input field through a finite-dimensional latent representation and reconstructs the output in the span of the target assignment functions. The full map may be nonlinear through $F_\theta$, but for fixed source and target assignments, all field-value dependence enters through the finite set of source coefficients $\{z_h(u)\}_{h=1}^G$.
\end{proposition}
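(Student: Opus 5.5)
This is an identification argument: I will match each stage of the discrete Advectra block from Section~\ref{sec:method} with the continuous operators $E_\psi$, $F_\theta$, $D_\phi$ defined above, and then compose them. First I fix the source and target assignments. Concretely, I freeze the transported charts $(X^{src},C^{src})$ and $(X^{tgt},C^{tgt})$, and I set
\[
\psi_h(x)=\frac{\kappa(x^{src}(x),c^{src}_h)}{\sum_k \kappa(x^{src}(x),c^{src}_k)},\qquad \phi_g(y)=\frac{\kappa(x^{tgt}(y),c^{tgt}_g)}{\sum_k \kappa(x^{tgt}(y),c^{tgt}_k)}.
\]
Here $x^{src}(\cdot)$ and $x^{tgt}(\cdot)$ denote the transported embedded coordinates viewed as functions of the physical location. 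With this choice, $\Psi_{ih}=\psi_h(x_i)$ and $\Phi_{ig}=\phi_g(y_i)$ at the sample points.

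\textbf{Encoder.} I take the empirical measure $\rho_M=\frac1M\sum_i\delta_{x_i}$, or the continuum density $\rho$ in the limit. The source projection $Z_h=\sum_i\Psi_{ih}V_i/\sum_i\Psi_{ih}$ then equals $z_h(u)$ exactly. The factor $1/M$ cancels between numerator and denominator, and positivity of $\kappa$ makes the denominators nonzero. So $\Pi_\Psi=E_\psi$.

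\textbf{Latent map.} I define $F_\theta$ as the composite $Z\mapsto\sigma(P_\pi^\top\mathcal{S}_{SSM}(P_\pi Z))$ together with any channel mixing. This is the step I expect to need care. The permutation $P_\pi$ is built from the midpoints $m_g$, which depend only on $C^{src}$ and $C^{tgt}$, so it is fixed once the assignments are fixed. However, the charts themselves depend on $X=\phi_x(H_{\mathrm{phys}})$, and hence potentially on $u$. The statement is therefore read with the charts, and thus $\pi$, held fixed, exactly as the hypothesis ``for fixed source and target assignments'' says. Under that reading, $F_\theta$ is a well-defined map $\mathbb{R}^G\to\mathbb{R}^G$ (or $\mathbb{R}^{G\times D}$ in the vector case, handled componentwise as noted), and it depends only on $Z$.

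\textbf{Decoder.} The reconstruction $Y_i=\sum_g\Phi_{ig}Z'_g$ is $D_\phi$ evaluated at $y_i$, and the same formula extends to arbitrary query points $y$. Composing the three identifications gives $\hat u=D_\phi\circ F_\theta\circ E_\psi(u)$, and writing it out pointwise gives the displayed sum. Two consequences follow immediately. The output lies in $\mathrm{span}\{\phi_g\}$ because it is a finite linear combination of these functions. All dependence on the field values passes through $(z_1(u),\dots,z_G(u))$ because the only place $u$ enters is $E_\psi$.
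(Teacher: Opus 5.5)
Your proposal is correct and follows the paper's own argument: the paper simply sets $z(u)=E_\psi u$, $a=F_\theta(z(u))$ and $\hat u=D_\phi a=\sum_g \phi_g a_g$ and composes them, treating the match with the discrete block as definitional, and your encoder/latent/decoder identification makes that match explicit. In that extra step, two small points: the discrete projection acts on the lifted payload $V=\phi_v(H_{\mathrm{phys}})$, so $Z_h=z_h(u)$ holds with $u$ read as that payload; and $\pi$ is actually input-independent, because $C^{src}$ and $C^{tgt}$ depend only on $C$, $\Delta$ and the weights of $\mathcal{U}$.
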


\begin{proof}
By definition, the source assignment functions produce latent coefficients
\[
    z(u)=E_\psi u \in \mathbb{R}^G.
\]
The latent sequence model maps these coefficients to
\[
    a=F_\theta(z(u))\in \mathbb{R}^G.
\]
The target assignment functions reconstruct the output as
\[
    \hat u(y)
    =
    (D_\phi a)(y)
    =
    \sum_{g=1}^G \phi_g(y)a_g
    =
    \sum_{g=1}^G
    \phi_g(y)
    \left[
        F_\theta(z(u))
    \right]_g .
\]
This proves the factorization.
\end{proof}

\paragraph{Linear special case.}
If the latent dynamics are linear, i.e. $F_\theta(z)=Az$ for some $A\in\mathbb{R}^{G\times G}$, then Proposition~\ref{prop:latent_factorization} reduces to a finite-rank integral operator:
\[
    \hat{u}(y)
    =
    \int_{\Omega} K_A(y,x)u(x)\rho(x)\,dx,
\]
where
\[
    K_A(y,x)
    =
    \sum_{g=1}^G\sum_{h=1}^G
    \phi_g(y)A_{gh}
    \frac{\psi_h(x)}
    {\int_{\Omega}\psi_h(s)\rho(s)\,ds}.
\]
Thus, the finite-rank kernel expression is a special case of the nonlinear transported latent operator, not an assumption on the full Advectra model.

\paragraph{Interpretation.}
The source assignments define localized measurements of the input field, while the target assignments define reconstruction functions in the output coordinate frame. The latent dynamics $F_\theta$ couple and transform these finite-dimensional coefficients. This provides a nonlinear transported latent-operator view of Advectra: the model aggregates in a source frame, evolves a finite-dimensional latent representation, and reconstructs in a target frame that need not coincide with the source frame.

\subsection{Locally Invertible Learned Coordinate Frames}
\label{app:local_invertibility}

Advectra constructs source and target coordinates using learned displacement maps. For a fixed context variable \(s\), write a coordinate update as
\[
    \tau(x)=x+\mathcal{T}(x,s).
\]
Under the scaled Lipschitz-constrained parameterization in Sec.~\ref{sec:tcs}, the displacement satisfies
\[
    \|\nabla_x \mathcal{T}(x,s)\|_{\mathrm{op}}<1
\]
on an open neighborhood of the coordinate domain used by the operator.

\begin{proposition}[Local invertibility of the learned coordinate map]
\label{prop:local_invertibility}
Assume \(\mathcal{T}(\cdot,s)\) is continuously differentiable on an open set containing the coordinate domain and
\[
    \|\nabla_x \mathcal{T}(x,s)\|_{\mathrm{op}} < 1
\]
for all \(x\) in the domain. Then the coordinate update
\[
    \tau(x)=x+\mathcal{T}(x,s)
\]
is locally invertible. Moreover, if \(\mathcal{T}(\cdot,s)\) is globally Lipschitz with constant \(L<1\) on the coordinate domain, then \(\tau\) is bi-Lipschitz onto its image:
\[
    (1-L)\|x-y\|
    \le
    \|\tau(x)-\tau(y)\|
    \le
    (1+L)\|x-y\|
\]
for all \(x,y\) in the domain.
\end{proposition}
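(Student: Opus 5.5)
The plan has two parts. Local invertibility will come from the inverse function theorem. Global bi-Lipschitz control will come from the triangle inequality applied to the residual form $\tau = \mathrm{id} + \mathcal{T}(\cdot,s)$.

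\textbf{Local invertibility.} The Jacobian of $\tau$ at a point $x$ is
\[
\nabla\tau(x) = I + \nabla_x\mathcal{T}(x,s).
\]
Since $\|\nabla_x\mathcal{T}(x,s)\|_{\mathrm{op}}<1$, this matrix is invertible. Concretely, the Neumann series $\sum_{k\ge 0}(-\nabla_x\mathcal{T})^k$ converges and gives the inverse. Equivalently, for any $v\neq 0$,
\[
\|(I+\nabla_x\mathcal{T})v\| \ge (1-\|\nabla_x\mathcal{T}\|_{\mathrm{op}})\|v\| > 0 .
\]
The map $\tau$ is $C^1$ on an open set containing the domain, because $\mathcal{T}(\cdot,s)$ is. The inverse function theorem then gives, around each $x$, an open neighborhood on which $\tau$ is a $C^1$ diffeomorphism onto an open image. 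This is the local invertibility claim.

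\textbf{Bi-Lipschitz bound.} Assume $\mathcal{T}(\cdot,s)$ is $L$-Lipschitz on the domain with $L<1$. For any $x,y$ in the domain,
\[
\tau(x)-\tau(y) = (x-y) + \bigl(\mathcal{T}(x,s)-\mathcal{T}(y,s)\bigr).
\]
The triangle inequality gives the upper bound $\|\tau(x)-\tau(y)\| \le (1+L)\|x-y\|$. The reverse triangle inequality gives the lower bound $\|\tau(x)-\tau(y)\| \ge \|x-y\| - L\|x-y\| = (1-L)\|x-y\|$. Because $1-L>0$, the lower bound shows that $\tau$ is injective on the whole domain. So $\tau$ is a bijection onto its image. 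Its inverse on that image is Lipschitz with constant $(1-L)^{-1}$, which is the bi-Lipschitz statement.

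\textbf{Main subtlety.} Every step above is routine. The one point needing care is the pointwise hypothesis $\|\nabla_x\mathcal{T}\|_{\mathrm{op}}<1$: on a non-convex domain it does not by itself imply a global Lipschitz constant below $1$. This is why the statement assumes global Lipschitzness separately for the second part, and only local invertibility is drawn from the pointwise bound. I would add a remark on this point. On a convex domain, the mean value inequality applied along segments, together with a uniform bound $\sup_x\|\nabla_x\mathcal{T}\|_{\mathrm{op}}\le L<1$, recovers the global hypothesis.
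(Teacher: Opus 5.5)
Your proposal is correct and follows essentially the same route as the paper: the Neumann-series criterion shows $I+\nabla_x\mathcal{T}(x,s)$ is nonsingular and the inverse function theorem gives local invertibility, while the triangle and reverse triangle inequalities give the bi-Lipschitz bounds and injectivity. Your extra remarks are accurate refinements rather than a different argument: the explicit lower bound $(1-\|\nabla_x\mathcal{T}\|_{\mathrm{op}})\|v\|$, the Lipschitz constant $(1-L)^{-1}$ of the inverse, and the caveat that the pointwise Jacobian bound alone does not yield a global Lipschitz constant below one.
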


\begin{proof}
The Jacobian of \(\tau\) is
\[
    \nabla \tau(x)=I+\nabla_x \mathcal{T}(x,s).
\]
Since \(\|\nabla_x \mathcal{T}(x,s)\|_{\mathrm{op}}<1\), the matrix
\(I+\nabla_x \mathcal{T}(x,s)\) is nonsingular by the Neumann-series criterion. Therefore \(\tau\) is locally invertible by the inverse function theorem.

If \(\mathcal{T}(\cdot,s)\) is globally Lipschitz with constant \(L<1\), then for any \(x,y\) in the coordinate domain,
\[
    \|\tau(x)-\tau(y)\|
    =
    \|x-y+\mathcal{T}(x,s)-\mathcal{T}(y,s)\|.
\]
The upper bound follows from the triangle inequality:
\[
    \|\tau(x)-\tau(y)\|
    \le
    (1+L)\|x-y\|.
\]
The lower bound follows from the reverse triangle inequality:
\[
    \|\tau(x)-\tau(y)\|
    \ge
    \|x-y\|-\|\mathcal{T}(x,s)-\mathcal{T}(y,s)\|
    \ge
    (1-L)\|x-y\|.
\]
Hence \(\tau\) is injective and bi-Lipschitz onto its image.
\end{proof}

\paragraph{Interpretation.}
This constraint prevents local folding of the learned coordinate map and supports interpreting the source and target frames as valid local coordinate systems. It does not require the learned map to equal the true physical flow map. Rather, it makes the learned frame a locally valid, approximately co-moving coordinate frame trained to align transported structures.

\subsection{Pure Transport as Asymmetric Assignment}
\label{app:pure_transport}

We next isolate the role of asymmetric source and target assignments. This result does not require a linear latent SSM in the full model. Instead, it shows that when the latent dynamics are the identity, the assignment geometry alone can represent pure transport, provided the target assignments align with transported source assignments.

Let $T:\Omega\to\Omega$ be an invertible transport map. The associated pushforward action on scalar fields is
\[
    (\mathcal{T}u)(y)
    =
    u(T^{-1}(y)).
\]

\begin{proposition}[Pure transport from source-target assignment]
\label{prop:pure_transport}
Let $\{\psi_g\}_{g=1}^G$ be a family of localized source functions, and define the transported target functions
\[
    \phi_g(y)=\psi_g(T^{-1}(y)).
\]
Suppose the latent dynamics are the identity, $F_\theta=I$. Then the Advectra reconstruction becomes
\[
    \hat{u}(y)
    =
    \sum_{g=1}^G
    \psi_g(T^{-1}(y))z_g(u).
\]
If the source reconstruction
\[
    \Pi_G u(x)
    =
    \sum_{g=1}^G \psi_g(x) z_g(u)
\]
converges to $u(x)$ as $G\to\infty$, then
\[
    \hat{u}(y)
    \to
    u(T^{-1}(y)).
\]
Thus, asymmetric source and target assignments have the capacity to approximate the transport operator induced by $T$ when the target basis is aligned with the transported source basis.
\end{proposition}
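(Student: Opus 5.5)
The plan is to apply Proposition~\ref{prop:latent_factorization} with the identity latent map and the transported target family, and then move the convergence of the source reconstruction through the change of variables $x=T^{-1}(y)$.

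\textbf{Step 1: the reconstruction formula.} Fix the source family $\{\psi_g\}$ and the target family $\phi_g=\psi_g\circ T^{-1}$. Proposition~\ref{prop:latent_factorization} gives
\[
\hat u(y)=\sum_{g=1}^G \phi_g(y)\,[F_\theta(z(u))]_g .
\]
Setting $F_\theta=I$ and substituting $\phi_g(y)=\psi_g(T^{-1}(y))$ gives the stated expression immediately. Note that the coefficients $z_g(u)$ are computed entirely in the source frame, through $E_\psi$. They do not depend on $T$.

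\textbf{Step 2: the key identity.} Compare $\hat u$ with the source reconstruction $\Pi_G u(x)=\sum_g\psi_g(x)z_g(u)$. The coefficients are the same and each basis function is composed with $T^{-1}$, so
\[
\hat u(y)=(\Pi_G u)(T^{-1}(y))\qquad\text{for every } y\in\Omega .
\]
This holds exactly for every $G$, so the asymmetric assignment computes exactly the pushforward of the source reconstruction. In particular,
\[
\hat u-\mathcal{T}u=(\Pi_G u-u)\circ T^{-1}.
\]

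\textbf{Step 3: passing to the limit.} For each fixed $y$, set $x=T^{-1}(y)\in\Omega$. This is well defined because $T$ is invertible. Pointwise convergence $\Pi_Gu(x)\to u(x)$ then gives $\hat u(y)\to u(T^{-1}(y))$ directly.

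The main obstacle is deciding in which sense ``converges'' is meant. Pointwise and uniform convergence transfer for free: $\sup_y|\hat u-\mathcal{T}u|=\sup_x|\Pi_Gu-u|$ because $T^{-1}$ is a bijection of $\Omega$. For $L^p$ convergence the change of variables produces the Jacobian factor $|\det\nabla T(x)|$, giving
\[
\|\hat u-\mathcal{T}u\|_{L^p}^p=\int_\Omega |\Pi_Gu-u|^p(x)\,|\det\nabla T(x)|\,dx .
\]
In this case I would additionally assume that $T$ is bi-Lipschitz or a $C^1$-diffeomorphism with bounded Jacobian. This is exactly what Proposition~\ref{prop:local_invertibility} supplies for the learned maps. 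Under that assumption the error is bounded by $\|\det\nabla T\|_\infty^{1/p}\|\Pi_Gu-u\|_{L^p}\to0$. I would state the pointwise or uniform version as the main claim and remark on the $L^p$ version under this Jacobian bound.
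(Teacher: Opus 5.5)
Your proposal is correct and follows the paper's proof essentially verbatim: substitute $F_\theta=I$ and $\phi_g=\psi_g\circ T^{-1}$ into the factorization of Proposition~\ref{prop:latent_factorization} to get $\hat u=(\Pi_G u)\circ T^{-1}$ exactly for each $G$, then evaluate the assumed pointwise convergence at $x=T^{-1}(y)$. Your uniform and $L^p$ refinements go beyond what the paper states and are fine, with one correction: Proposition~\ref{prop:local_invertibility} concerns the learned maps on the embedded coordinate chart, which are bi-Lipschitz only onto their image. It says nothing about the physical map $T:\Omega\to\Omega$ here, so the Jacobian bound on $T$ should be stated as a separate assumption rather than attributed to that result.
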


\begin{proof}
With $F_\theta=I$, Proposition~\ref{prop:latent_factorization} gives
\[
    \hat{u}(y)
    =
    \sum_{g=1}^G \phi_g(y) z_g(u).
\]
Substituting $\phi_g(y)=\psi_g(T^{-1}(y))$, we obtain
\[
    \hat{u}(y)
    =
    \sum_{g=1}^G \psi_g(T^{-1}(y))z_g(u)
    =
    \Pi_Gu(T^{-1}(y)).
\]
By assumption, $\Pi_Gu(x)\to u(x)$ as $G\to\infty$. Evaluating this convergence at $x=T^{-1}(y)$ yields
\[
    \hat{u}(y)
    =
    \Pi_Gu(T^{-1}(y))
    \to
    u(T^{-1}(y)).
\]
This is precisely the transported field $(\mathcal{T}u)(y)$.
\end{proof}

\paragraph{Connection to discrete implementation.}
In Advectra, $\psi_g$ and $\phi_g$ correspond to normalized assignment weights constructed from similarity kernels in the learned source and target frames. Proposition~\ref{prop:pure_transport} should be read as an idealized representational statement. The implementation does not enforce the identity $\phi_g = \psi_g \circ T^{-1}$; instead, $\psi_g$ and $\phi_g$ are softmaxes over similarity kernels evaluated in two separately learned frames that are constructed from the same transport networks $\mathcal{T}$ and $\mathcal{U}$ applied at different coordinate states. The proposition therefore characterizes the representational capacity of the asymmetric assignment construction: when training drives the learned target frame to align with the transported source frame, and when the source assignments form a convergent reconstruction family, asymmetric reconstruction can approximate finite-time pushforward transport. Whether training achieves this alignment in practice is an empirical question, addressed by the transport diagnostics in Appendix~\ref{app:transport_analysis}. The learned latent dynamics $F_\theta$ in the full model then operate on these transported latent coefficients and can model effects beyond pure transport, such as deformation, diffusion, forcing, and nonlinear feature interaction.

\paragraph{Interpretation.}
This proposition formalizes why Advectra uses distinct source and target assignments. The source assignment extracts information in the input frame, while the target assignment deposits the latent representation in a transported output frame. When the target assignment aligns with the transported source basis, the architecture can represent the pushforward form of finite-time transport.

\subsection{Approximately Co-Moving Frame Interpretation}
\label{app:comoving}

We now relate the transported-assignment construction to co-moving coordinates. Let $T_t:\Omega\to\Omega$ denote a flow map. A passively transported scalar satisfies
\[
    u_t(y)=u_0(T_t^{-1}(y)).
\]

\begin{proposition}[Stationarity in exact co-moving coordinates]
\label{prop:comoving_exact}
Suppose a scalar field evolves by pure transport under an invertible map $T_t$, i.e.,
\[
    u_t(y)=u_0(T_t^{-1}(y)).
\]
Then in co-moving coordinates $y=T_t(x)$, the field is stationary:
\[
    u_t(T_t(x)) = u_0(x).
\]
\end{proposition}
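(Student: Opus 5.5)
The plan is a direct substitution, since the statement is essentially a change of variables applied to the pure-transport identity. Fix $t$ and an arbitrary point $x\in\Omega$, and set $y=T_t(x)$. Because $T_t:\Omega\to\Omega$ is assumed invertible, this $y$ lies in $\Omega$, so the hypothesis $u_t(y)=u_0(T_t^{-1}(y))$ may be evaluated there. This gives
\[
    u_t(T_t(x)) = u_0\bigl(T_t^{-1}(T_t(x))\bigr),
\]
and the identity $T_t^{-1}\circ T_t=\mathrm{id}_\Omega$ collapses the right-hand side to $u_0(x)$. Since $x$ was arbitrary, the composite field $x\mapsto u_t(T_t(x))$ coincides with $u_0$ for every $t$. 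It is therefore independent of $t$, which is exactly the claimed stationarity in co-moving coordinates.

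The only point needing care is the well-definedness of the change of variables. Invertibility of $T_t$ ensures that every $y\in\Omega$ arises as $T_t(x)$ for a unique $x$. Consequently the co-moving description $x\mapsto u_t(T_t(x))$ loses no information about $u_t$, and the stationarity statement is equivalent to the pushforward form of the hypothesis, not merely a consequence of it. I would note this equivalence explicitly, so that the result reads as a genuine reformulation of pure transport as the pushforward $\mathcal{T}$ of Proposition~\ref{prop:pure_transport}.

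I do not expect a real obstacle; this is the only step where anything must be checked. No regularity of $T_t$ beyond bijectivity is needed, since nothing is differentiated.

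I would close by connecting to the learned frames. Proposition~\ref{prop:local_invertibility} guarantees local invertibility of the learned coordinate update $\tau$. This is the property that lets the learned source and target charts play the role of $T_t$ locally, and it is what justifies the word ``approximately'' in the co-moving interpretation.
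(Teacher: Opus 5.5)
Your argument is correct and matches the paper's proof exactly: substitute $y=T_t(x)$ into the transport relation and use $T_t^{-1}\circ T_t=\mathrm{id}$ to obtain $u_t(T_t(x))=u_0(x)$. One caveat on your closing remark, which is not part of the proof: local invertibility of $\tau$ (Proposition~\ref{prop:local_invertibility}) does not by itself make the learned chart approximately co-moving, since that is controlled separately by the $\varepsilon$-closeness hypothesis of Proposition~\ref{prop:approx_comoving}.
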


\begin{proof}
Substitute $y=T_t(x)$ into the transport relation:
\[
    u_t(T_t(x))
    =
    u_0(T_t^{-1}(T_t(x))).
\]
Since $T_t$ is invertible,
\[
    T_t^{-1}(T_t(x))=x.
\]
Therefore,
\[
    u_t(T_t(x))=u_0(x),
\]
which proves the claim.
\end{proof}

The learned coordinate map in Advectra is not assumed to equal the true flow map. The following elementary bound explains the sense in which a learned locally invertible map can define an approximately co-moving frame.

\begin{proposition}[Approximate co-moving coordinates]
\label{prop:approx_comoving}
Suppose a scalar field evolves by pure transport under an invertible flow map $T_t$:
\[
    u_t(y)=u_0(T_t^{-1}(y)).
\]
Let $\tau_t$ be a learned locally invertible coordinate map satisfying $\tau_t(x)\in T_t(\Omega)$ for all $x\in\Omega$. Assume $T_t^{-1}$ is $L_T$-Lipschitz on $T_t(\Omega)$, $u_0$ is $L_u$-Lipschitz, and
\[
    \sup_{x\in\Omega}\|\tau_t(x)-T_t(x)\|\le \varepsilon.
\]
Then
\[
    \sup_{x\in\Omega}
    |u_t(\tau_t(x))-u_0(x)|
    \le
    L_u L_T \varepsilon.
\]
\end{proposition}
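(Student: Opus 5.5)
The plan is to reduce the claim to the exact co-moving identity of Proposition~\ref{prop:comoving_exact} and then control the discrepancy between the learned map $\tau_t$ and the true flow map $T_t$ by a chain of Lipschitz estimates.

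Fix $x\in\Omega$. By Proposition~\ref{prop:comoving_exact}, $u_0(x)=u_t(T_t(x))$. Hence
\[
    u_t(\tau_t(x))-u_0(x)
    =
    u_t(\tau_t(x))-u_t(T_t(x)).
\]
Both $\tau_t(x)$ and $T_t(x)$ lie in $T_t(\Omega)$: the first by the standing assumption on $\tau_t$, the second trivially. On $T_t(\Omega)$ the transport relation $u_t=u_0\circ T_t^{-1}$ holds, so the difference can be rewritten as
\[
    u_0\bigl(T_t^{-1}(\tau_t(x))\bigr)-u_0\bigl(T_t^{-1}(T_t(x))\bigr).
\]

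Three estimates then give the bound:
\begin{itemize}
    \item The $L_u$-Lipschitz property of $u_0$ bounds the absolute value of this difference by $L_u\,\|T_t^{-1}(\tau_t(x))-T_t^{-1}(T_t(x))\|$.
    \item The $L_T$-Lipschitz property of $T_t^{-1}$ on $T_t(\Omega)$ bounds this in turn by $L_uL_T\|\tau_t(x)-T_t(x)\|$.
    \item The uniform approximation hypothesis gives at most $L_uL_T\varepsilon$.
\end{itemize}
Since $x$ was arbitrary, taking the supremum over $\Omega$ yields the claim.

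The only step requiring care is justifying the Lipschitz estimate for $T_t^{-1}$. This needs both points to lie in $T_t(\Omega)$, which is exactly why the hypothesis $\tau_t(x)\in T_t(\Omega)$ is imposed; the Lipschitz assumption is on the set itself, so no convexity of the domain is required. The local invertibility of $\tau_t$ from Proposition~\ref{prop:local_invertibility} is not actually used in the estimate; it serves only the interpretation of $\tau_t$ as a coordinate frame.
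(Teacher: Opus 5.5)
Your proposal is correct and follows essentially the same argument as the paper: rewrite both $u_t(\tau_t(x))$ and $u_0(x)$ as $u_0\circ T_t^{-1}$ evaluated at $\tau_t(x)$ and $T_t(x)$, then apply the Lipschitz bounds for $u_0$ and $T_t^{-1}$ and the uniform $\varepsilon$-closeness. Invoking Proposition~\ref{prop:comoving_exact} for the identity $u_0(x)=u_t(T_t(x))$ is only a cosmetic difference, since the paper substitutes $T_t^{-1}(T_t(x))=x$ directly.
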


\begin{proof}
Using the transport relation,
\[
    u_t(\tau_t(x))
    =
    u_0(T_t^{-1}(\tau_t(x))).
\]
Therefore,
\[
    |u_t(\tau_t(x))-u_0(x)|
    =
    |u_0(T_t^{-1}(\tau_t(x)))-u_0(T_t^{-1}(T_t(x)))|.
\]
Since $u_0$ is $L_u$-Lipschitz and $T_t^{-1}$ is $L_T$-Lipschitz,
\[
    |u_t(\tau_t(x))-u_0(x)|
    \le
    L_u L_T \|\tau_t(x)-T_t(x)\|
    \le
    L_u L_T \varepsilon.
\]
Taking the supremum over $x\in\Omega$ gives the claim.
\end{proof}

\paragraph{Interpretation.}
Proposition~\ref{prop:comoving_exact} gives the exact co-moving identity for a true transport map. Proposition~\ref{prop:approx_comoving} formalizes the approximate case as a conditional statement: if a learned coordinate map is $\varepsilon$-close to the physical transport map, then the field becomes $L_u L_T \varepsilon$-stationary in the learned coordinates. The training objective does not directly minimize $\varepsilon$, so this proposition does not assert that the learned map approximates the true flow map. It characterizes the sense in which Advectra's source and target frames could be interpreted as co-moving: under alignment of the learned map with transport, the field is approximately stationary in the learned coordinates. Whether training drives such alignment is again an empirical question, supported by the center-of-mass and structural-fidelity diagnostics in Appendix~\ref{app:transport_analysis}.

\subsection{Ordering and Locality of Sequential Latent Updates}
\label{app:ordering}

Finally, we formalize the purpose of geometric ordering for the SSM backbone. State-space sequence models apply updates along a one-dimensional ordering. If neighboring sequence indices correspond to distant physical regions, local sequential interactions may connect physically unrelated regions.

Let $m_g\in\Omega$ denote the physical centroid or representative location of latent token $g$. For an ordering $\pi$ of $\{1,\ldots,G\}$, define the cumulative physical jump length
\[
    L(\pi)
    =
    \sum_{k=1}^{G-1}
    \|m_{\pi(k+1)}-m_{\pi(k)}\|.
\]

\begin{proposition}[Ordering reduces adjacent physical jump length]
\label{prop:ordering}
Consider a sequential operator whose strongest local interactions occur between neighboring sequence positions, e.g. $k-1,k,k+1$. Under an ordering $\pi$, the total physical length of adjacent sequence interactions is
\[
    L(\pi)
    =
    \sum_{k=1}^{G-1}
    \|m_{\pi(k+1)}-m_{\pi(k)}\|.
\]
Therefore, choosing an ordering with smaller $L(\pi)$ reduces the cumulative physical distance across adjacent sequence interactions.
\end{proposition}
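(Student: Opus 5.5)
This statement is close to a definition. The plan is therefore to make explicit the sense in which $L(\pi)$ measures the physical extent of adjacent sequence interactions, and then observe that minimizing it is monotone in that measure.

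\emph{Step 1: identify the adjacent interactions.} Under the stated assumption, the strongest local interactions of the sequential operator (e.g.\ the bidirectional SSM $\mathcal{S}_{SSM}$ applied to $P_\pi Z$) couple positions $k$ and $k\pm1$. In the ordered sequence, position $k$ carries latent token $\pi(k)$. The set of adjacent interacting pairs is therefore
\[
\mathcal{E}(\pi)=\{(\pi(k),\pi(k+1)) : k=1,\ldots,G-1\}.
\]
The pair $(k,k-1)$ is the same unordered pair as $(k-1,k)$, so bidirectionality does not double count. Each pair has physical length $\|m_{\pi(k+1)}-m_{\pi(k)}\|$, which is the distance between the representative locations $m_g$.

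\emph{Step 2: sum the lengths.} Summing over $\mathcal{E}(\pi)$ gives exactly $L(\pi)$, which is the displayed identity. Since $L$ is defined as this sum, no computation is needed beyond this bookkeeping.

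\emph{Step 3: compare two orderings.} For two orderings $\pi,\pi'$ with $L(\pi)<L(\pi')$, the cumulative physical distance spanned by adjacent interactions under $\pi$ is strictly smaller, by Step 2. This is the ``therefore'' clause. I would also note a consequence: $L(\pi)$ bounds the largest adjacent jump and $(G-1)$ times the mean adjacent jump. A smaller $L$ therefore also bounds the average adjacent separation, which is the locality notion measured in Appendix~\ref{sec:ordering_analysis}.

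\emph{Main obstacle: scope of the claim.} The only delicate point is interpretive. The proposition does not claim that greedy nearest-neighbour traversal minimizes $L$, since that would be a travelling-salesman-path problem and greedy is only a heuristic. Nor does it claim that smaller $L$ improves accuracy. I would state explicitly that the conclusion concerns only the cumulative length functional, and that greedy ordering is used as a heuristic for reducing it relative to an arbitrary ordering.
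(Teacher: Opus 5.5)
Your proposal is correct and matches the paper's proof. Both pair sequence neighbours $(\pi(k),\pi(k+1))$, assign each pair the distance $\|m_{\pi(k+1)}-m_{\pi(k)}\|$, and sum to obtain $L(\pi)$; your remarks on scope agree with the paper's interpretation paragraph. One caution on your side remark: $L(\pi)$ only upper-bounds the largest single jump, so lowering $L$ need not lower it, and the paper's own data show the maximum rising from $4.68$ to $6.52$ under ordering.
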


\begin{proof}
Under ordering $\pi$, neighboring sequence positions $k$ and $k+1$ correspond to physical token locations $m_{\pi(k)}$ and $m_{\pi(k+1)}$. The physical distance associated with this adjacent sequence interaction is
\[
    d_k = \|m_{\pi(k+1)}-m_{\pi(k)}\|.
\]
The total physical extent of all adjacent sequence couplings is therefore
\[
    \sum_{k=1}^{G-1} d_k
    =
    \sum_{k=1}^{G-1}
    \|m_{\pi(k+1)}-m_{\pi(k)}\|
    =
    L(\pi).
\]
Thus, reducing $L(\pi)$ directly reduces the cumulative distance across adjacent sequence interactions. This proves the claim.
\end{proof}

\paragraph{Interpretation.}
This result does not require the ordering to solve a globally optimal traveling-salesperson problem. It only requires that the ordering reduce the physical discontinuity between adjacent latent tokens. This explains why geometry-aware ordering can improve SSM-based operators on complex 3D geometries: it aligns the one-dimensional latent sequence with spatially coherent neighborhoods, reducing the extent to which local sequential interactions connect physically distant regions.

\paragraph{Summary.}
Together, Propositions~\ref{prop:latent_factorization}--\ref{prop:ordering} provide a mathematical interpretation of Advectra as a transported finite-dimensional latent operator. The source assignments define integral measurements of the input field, the nonlinear latent SSM evolves these finite-dimensional coefficients, and the target assignments reconstruct the output in a learned target frame. The Jacobian constraint makes the learned coordinate updates locally invertible, supporting the interpretation of the source and target frames as candidate co-moving latent coordinate systems whose alignment with transport is encouraged by training and probed empirically in Appendix~\ref{app:transport_analysis}. In the special case where target assignments coincide with transported source assignments, asymmetric reconstruction recovers the standard pushforward form of pure transport. Finally, geometric ordering reduces adjacent physical jump length in the latent sequence, encouraging spatially coherent SSM updates.

\section{Runtime Efficiency and Computational Overhead}
\label{app:runtime}

We evaluate the runtime efficiency of Advectra compared to prior latent operator baselines on the Airfoil benchmark. A primary concern of the proposed method is whether the additional ordering mechanism introduces non-negligible computational overhead.

\paragraph{Empirical Runtime Comparison.}
Table~\ref{tab:runtime} reports the average wall-clock time per training iteration and the corresponding floating-point operations (FLOPs) per batch. Despite incorporating an explicit ordering step, Advectra achieves runtime comparable to existing methods, with only a marginal increase in iteration time.

\begin{table}[h]
\centering
\caption{Runtime comparison on the Airfoil benchmark. We report average time per iteration and FLOPs per batch.}
\label{tab:runtime}
\begin{tabular}{l|cc}
\toprule
\textbf{Model} & \textbf{Time (s/iter)} $\downarrow$ & \textbf{GFLOPs / batch} $\downarrow$ \\
\midrule
Transolver & 0.1111 & 266.09 \\
LaMO & 0.1150 & 266.69 \\
\textbf{Advectra (Ours)} & 0.1209 & 258.06 \\
\bottomrule
\end{tabular}
\end{table}

\paragraph{Discussion.}
Although Advectra introduces an additional ordering step over latent tokens, this does not lead to a significant increase in computational cost in practice. All compared methods operate on a compressed set of latent tokens, and Advectra achieves strong performance with a relatively small number of tokens.

Under the reported Airfoil configuration, Advectra exhibits slightly lower FLOPs than both Transolver and LaMO, indicating that the overall computational cost is dominated by the latent transformation rather than the ordering step.

These results suggest that the proposed ordering mechanism incurs modest practical overhead in this benchmark while contributing to the observed accuracy improvements. In particular, effective performance with fewer tokens reduces the computational burden on the backbone, offsetting the cost of ordering.

\subsection{Computational Complexity Analysis}
\label{app:complexity}

We analyze the computational cost of Advectra and show that the proposed transport and ordering mechanisms do not change the dominant scaling of latent operator layers under the standard latent-operator regime. Let \(M\) denote the number of spatial mesh points, \(G\) the number of latent anchors, \(d\) the embedded coordinate dimension, and \(D\) the feature channel dimension. In typical latent neural operators \citep{wu2024transolver, luotransolver++, tiwarilatent, hu2025transolver}, the spatial resolution is much larger than the latent resolution:
\[
    M \gg G .
\]

\paragraph{Transported coordinates and assignments.}
Advectra constructs source and target coordinate systems
\[
    (X_s,C_s), \qquad (X_t,C_t),
\]
where \(X_s,X_t\in\mathbb{R}^{M\times d}\) and \(C_s,C_t\in\mathbb{R}^{G\times d}\). The learned displacement maps are evaluated pointwise on spatial coordinates and anchors, so their cost is linear in \(M+G\) and does not introduce any pairwise \(M^2\) interaction.

Advectra then computes separate source and target assignment matrices,
\[
    \Psi=\mathcal{A}(X_s,C_s),
    \qquad
    \Phi=\mathcal{A}(X_t,C_t).
\]
Each assignment evaluates a similarity kernel between \(M\) embedded points and \(G\) anchors. Let \(c_\kappa(d)\) denote the cost of one kernel evaluation in the embedded coordinate space. Then one assignment matrix costs
\[
    O\!\left(MG\,c_\kappa(d)\right),
\]
plus \(O(MG)\) for normalization over anchors. Computing both \(\Psi\) and \(\Phi\) changes only the constant factor.

For dot-product or distance-based kernels, \(c_\kappa(d)=O(d)\). For an MLP-based similarity kernel, \(c_\kappa(d)\) is the cost of the small per-pair MLP applied to the point-anchor representation. In our implementation this kernel is lightweight relative to the feature projection width, so the assignment stage remains
\[
    O\!\left(MG\,c_\kappa(d)\right)
\]
and does not dominate the \(D\)-dimensional feature aggregation and reconstruction costs.

\paragraph{Projection, latent processing, and reconstruction.}
The source projection aggregates \(D\)-dimensional mesh features into \(G\) latent tokens:
\[
    Z_g =
    \frac{\sum_{i=1}^M \Psi_{ig} V_i}
    {\sum_{i=1}^M \Psi_{ig}},
\]
which costs \(O(MGD)\). The target reconstruction
\[
    Y_i=\sum_{g=1}^G \Phi_{ig} Z'_g
\]
also costs \(O(MGD)\). Thus, the mesh-to-token and token-to-mesh stages have dominant feature-routing cost
\[
    O(MGD).
\]

The latent SSM backbone operates only on the compressed sequence of length \(G\), independent of \(M\). For a linear-time SSM layer, this costs
\[
    O(GD),
\]
or \(O(LGD)\) for \(L\) layers.

\paragraph{Ordering cost.}
Advectra additionally orders the \(G\) anchors according to their transported geometry. This operation is performed only over latent anchors and is independent of \(M\). A nearest-neighbor traversal over anchor distances costs
\[
    O(G^2).
\]
Since ordering is refreshed every \(K\) iterations and does not require backpropagation, the amortized per-iteration cost is
\[
    O(G^2/K).
\]

\begin{proposition}[Asymptotic complexity of Advectra]
\label{prop:advectra_complexity}
Assume the standard latent-operator regime \(M\gg G\), a fixed ordering refresh interval \(K\), and a lightweight assignment kernel satisfying \(c_\kappa(d)=O(D)\) under the chosen model configuration. Then the amortized per-layer complexity of Advectra is
\[
    O(MGD),
\]
matching standard latent routing up to constant factors.
\end{proposition}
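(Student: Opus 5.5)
The proof is a stage-by-stage cost accounting: write the per-layer cost as a sum of terms, one for each stage of the Advectra block already analyzed in this subsection, and show that each term is absorbed into $O(MGD)$ under the stated assumptions. The stages are:
\begin{enumerate}
\item the embeddings and the transported coordinates $X^{src},X^{tgt},C^{src},C^{tgt}$;
\item the two assignment matrices $\Psi,\Phi$;
\item the source projection $\Pi_\Psi$ and target decoding $D_\Phi$;
\item the bidirectional latent SSM $\mathcal{F}_{\theta,\pi}$;
\item the amortized ordering cost.
\end{enumerate}
The total is then
\[
O\bigl((M+G)c_{\mathcal{T}}\bigr)+O\bigl(MG\,c_\kappa(d)\bigr)+O(MGD)+O(GD)+O(G^2/K),
\]
where $c_{\mathcal{T}}$ is the per-point cost of evaluating $\mathcal{T}$ or $\mathcal{U}$.

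\textbf{Displacement maps.} Stage 1 applies pointwise networks of fixed width, treated as $O(D)$ per evaluation, so it costs $O((M+G)D)$. Since $G\ge 1$, this is at most $O(MGD)$.

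\textbf{Assignments.} For stage 2, invoke the assumption $c_\kappa(d)=O(D)$. The two kernel evaluations and the row normalizations then cost $O(MGD)+O(MG)=O(MGD)$. Stage 3 was computed above as $O(MGD)$ directly.

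\textbf{Latent SSM and ordering.} The SSM in stage 4 costs $O(GD)\le O(MGD)$. For stage 5, the key step is the comparison $G^2/K\le MGD$. Because $K$ is fixed, $K\ge 1$, and $D\ge 1$, this reduces to $G\le M$, which holds in the regime $M\gg G$. So $O(G^2/K)=O(MGD)$. Even without amortization the unamortized $O(G^2)$ term is absorbed the same way, so nothing hinges on $K$ beyond it being a fixed positive constant.

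\textbf{Matching standard routing.} Summing the five terms gives $O(MGD)$. Physics-attention and LinearNO-style routing must themselves perform an $M\times G$ assignment and $D$-channel aggregation and reconstruction, so they already incur $\Theta(MGD)$. Advectra therefore matches them up to constants.

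\textbf{Main obstacle.} The hardest part is making precise which quantities are treated as constants. The embedding and displacement-network widths, the number of SSM layers, and the MLP kernel width must all be bounded by $O(D)$ or by fixed constants for the absorption steps to go through. I would state these conventions explicitly at the start. A second, minor point is that "amortized" should be understood as averaging over training iterations; at inference the ordering can be cached, or charged at the unamortized $O(G^2)$, which is still dominated.
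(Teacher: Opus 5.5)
Your proposal is correct and follows the paper's own argument: you sum the per-stage costs $O(MG\,c_\kappa(d))+O(MGD)+O(GD)+O(G^2/K)$ and absorb each term into $O(MGD)$ using $c_\kappa(d)=O(D)$ and $M\gg G$. Your version is somewhat more careful than the paper's: you carry the pointwise displacement-map cost $O((M+G)c_{\mathcal{T}})$ explicitly (the paper mentions it only in the preceding text), note that the ordering term is dominated even without amortization, and flag the implicit convention that network widths are treated as $O(D)$ or as constants.
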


\begin{proof}
Combining the terms above, the amortized per-layer complexity is
\[
    \mathcal{C}_{\mathrm{Advectra}}
    =
    O\!\left(MG\,c_\kappa(d)\right)
    +
    O(MGD)
    +
    O(GD)
    +
    O(G^2/K).
\]
The first term is the coordinate-kernel assignment cost, the second is feature projection and reconstruction, the third is latent SSM processing, and the fourth is amortized ordering.

Under the lightweight-kernel assumption \(c_\kappa(d)=O(D)\), the assignment term is no larger than \(O(MGD)\). Under \(M\gg G\), the spatial routing term also dominates \(O(GD)\) and \(O(G^2/K)\). Therefore,
\[
    \mathcal{C}_{\mathrm{Advectra}} = O(MGD).
\]
\end{proof}

\paragraph{Discussion.}
The transport mechanism introduces separate source and target assignments, which affects constant factors but not the dominant scaling. The assignment kernels operate in embedded coordinate space; whether instantiated as dot-product, distance-based, or lightweight MLP kernels, their cost is captured by \(O(MG\,c_\kappa(d))\). The dominant feature aggregation and reconstruction costs scale as \(O(MGD)\), while ordering is computed only over the compressed anchor set and amortized across iterations. Consequently, Advectra preserves the scalability of latent neural operators while adding transport-aware source and target charts.

\section{Scalability with Respect to Token Count}
\label{app:anchor_scaling}


We analyze token-count sensitivity on Airfoil, where varying the number of latent slices is a standard efficiency study. This experiment is complementary to the main industrial comparisons, where all baselines use their default best-reported configurations from prior work.

\begin{figure}[t]
    \centering
    \includegraphics[width=0.7\textwidth]{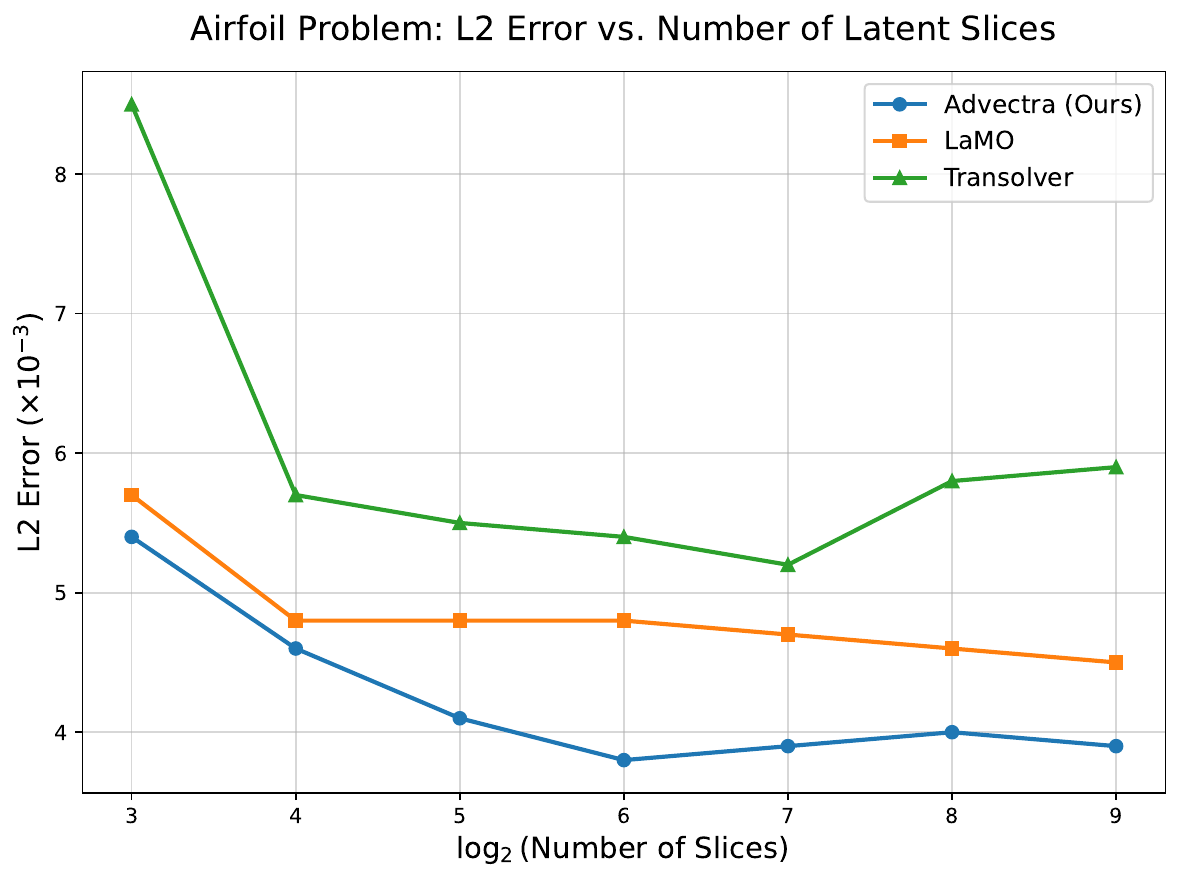}
    \caption{Relative $L_2$ error on the Airfoil benchmark as a function of token count. All methods improve with increasing token count and eventually saturate. Advectra reaches its saturated regime at a lower token count in this sweep, while the stronger \(128\)-slice setting benefits Transolver and LaMO.}
    \label{fig:anchor_scaling}
\end{figure}

Figure~\ref{fig:anchor_scaling} presents the relative $L_2$ error on the Airfoil benchmark as a function of token count. We observe that all methods benefit from increasing the number of tokens and converge to similar performance regimes at higher resolutions (around $128$ tokens and beyond). Improvements beyond this point are on the order of $10^{-4}$ and are therefore marginal.


A key distinction lies in how quickly each method reaches its saturated regime. Advectra saturates at approximately \(32\)--\(64\) tokens, while Transolver and LaMO continue improving up to the stronger \(128\)-slice setting in this sweep.

These results suggest that Advectra uses latent tokens more efficiently than prior latent operator architectures. Rather than relying primarily on increased token resolution, the transported source-target construction enables each token to capture more informative geometric and dynamical structure.

\paragraph{Interpretation.}
These results suggest that Advectra uses latent tokens more efficiently in the Airfoil token-count sweep. This does not alter the main industrial comparison protocol: on AirfRANS, ShapeNet-Car, and AirCraft, all baselines are evaluated using their default best-reported configurations, while Advectra uses the same latent-token budget with a smaller channel width.

\paragraph{Discussion.}
This behavior is advantageous in practice, since increasing token count directly impacts memory and computational cost. The ability to reach strong performance with fewer tokens improves the efficiency-accuracy tradeoff of the latent representation, reducing the need for large token budgets while maintaining accuracy.

\section{Benchmarks}
\label{app:benchmarks}

We evaluate Advectra on transport-dominated PDE benchmarks and industrial aerodynamic design benchmarks. The PDE benchmarks are drawn from POSEIDON~\citep{herde2024POSEIDON}, where the emphasis is on finite-time prediction of non-stationary dynamics under a semigroup training protocol. The industrial benchmarks evaluate performance on structured and unstructured aerodynamic geometries, ranging from 2D airfoils to large-scale 3D automotive and aircraft simulations.

\subsection{POSEIDON PDE Benchmarks}

For the PDE experiments, we follow the semigroup-style operator learning formulation used in POSEIDON~\citep{herde2024POSEIDON}. Rather than training models autoregressively to predict only the next time step, the model is trained on input-output pairs across different temporal gaps:
\[
    u(t_j) = \mathcal{S}_{t_j-t_i}(u(t_i)),
\]
where $\mathcal{S}_{\Delta t}$ denotes the solution operator over a finite time interval $\Delta t$. This formulation evaluates whether a model can directly learn long-horizon evolution without iterative rollout.

In our experiments, we use $1000$ training trajectories and $130$ testing trajectories for all POSEIDON datasets. This differs from the original POSEIDON setting, which uses substantially larger trajectory collections for foundation-model pretraining. For GCE-RT, we evaluate prediction at the $7$th time step. For NS-Tracer-PwC and FNS-KF, we evaluate prediction at the $14$th time step, following the original POSEIDON evaluation horizon. For NS-Tracer-PwC and FNS-KF, we additionally downsample both space and time by a factor of $2$.

\paragraph{GCE-RT.}
The GCE-RT benchmark models the two-dimensional compressible Euler equations with gravitational forcing. The conserved hydrodynamic state is
\[
    u = [\rho, \rho v_x, \rho v_y, E]^\top,
\]
where \(\rho\) is density, \((v_x,v_y)\) is velocity, and \(E\) is total energy. The equations take the form
\[
    \partial_t u + \nabla \cdot F(u) = S_\phi,
\]
where \(F(u)\) is the compressible Euler flux and \(S_\phi\) contains gravitational source terms determined by the prescribed gravitational potential \(\phi\).

The released GCE-RT state also includes a passive tracer \(q\), which is carried by the flow and does not feed back into the Euler dynamics. We use this tracer as an additional evaluation variable for interface transport and mixing. In continuous notation, the tracer can be viewed as a material scalar satisfying
\[
    \partial_t q + v\cdot\nabla q = 0
\]
up to numerical diffusion in the simulator.

The benchmark is based on a stochastic Rayleigh--Taylor instability in a two-dimensional model neutron-star configuration. The spatial domain is
\[
    \Omega = [-1/2,1/2]^2.
\]
The pressure and gravitational potential are initialized from a polytropic equilibrium:
\[
    p(r) = K_0 \rho_0^2
    \left(\frac{\sin(\alpha r)}{\alpha r}\right)^2,
    \qquad
    \phi(r) =
    -2K_0\rho_0
    \frac{\sin(\alpha r)}{\alpha r},
\]
where \(r=\sqrt{x^2+y^2}\), \(K_0=p_0/\rho_0^2\), and \(\alpha\) is determined by the gravitational constant and polytropic constant. The initial velocity is zero. The density profile contains a Rayleigh--Taylor interface at
\[
    r_{\mathrm{RT}} = 0.25\big(1+a\cos(\operatorname{atan2}(y,x)+b)\big),
\]
where \(a\) and \(b\) are random variables controlling the perturbation amplitude and phase. Additional stochasticity is introduced through perturbations of the central density, pressure, and Atwood number.

The solution operator is evaluated on the evolving fields
\[
    \mathcal{S}(t,\rho^0,v_x^0,v_y^0,p^0,q^0;\phi)
    =
    [\rho(t),v_x(t),v_y(t),p(t),q(t)].
\]
Here \(\phi\) is treated as a prescribed static potential that determines the gravitational forcing, not as a dynamic prediction target. Accordingly, we omit \(\phi\) from the main error table and report GCE-RT performance only on the evolving hydrodynamic and tracer variables. This dataset is challenging because the gravitationally driven instability produces strong nonlinear mixing and multi-scale interface deformation.

\paragraph{NS-Tracer-PwC.}
NS-Tracer-PwC augments the incompressible Navier--Stokes equations with a passive scalar tracer. The velocity field evolves according to the Navier--Stokes dynamics, while the tracer concentration $c(x,y,t)$ is transported by the velocity field without influencing it. The tracer satisfies the advection-diffusion equation
\[
    \partial_t c + v\cdot \nabla c = \kappa \Delta c,
\]
where $v=(v_x,v_y)$ is the fluid velocity and $\kappa$ is the tracer diffusivity. The tracer is initialized as a localized sphere centered in the domain, while the stochasticity comes from the random initial velocity field used in the underlying Navier--Stokes simulation.

The corresponding solution operator is
\[
    \mathcal{S}(t,v_x^0,v_y^0,c^0)
    =
    [v_x(t),v_y(t),c(t)].
\]
This benchmark is especially relevant for Advectra because the tracer directly measures the ability of the model to represent finite-time advection, stretching, and diffusion. We report prediction error at the $14$th time step, corresponding to the long-horizon evaluation protocol in POSEIDON.

\paragraph{FNS-KF.}
FNS-KF considers the two-dimensional forced Navier--Stokes equations with a Kolmogorov-type forcing term:
\[
    \partial_t v + (v\cdot\nabla)v + \nabla p - \nu \Delta v = f,
    \qquad
    \nabla\cdot v = 0,
\]
on the periodic domain $[0,1]^2$. The forcing is time-independent and given by
\[
    f(x,y)=0.1\sin(2\pi(x+y)).
\]
The velocity field is initialized using the same distribution as the NS-Tracer-PwC family of tasks. Equivalently, the problem can be viewed as an augmented system with state
\[
    U=[v_x,v_y,f],
\]
where the forcing satisfies the trivial evolution equation $\partial_t f=0$. The solution operator is therefore
\[
    \mathcal{S}(t,U^0)
    =
    [v_x(t),v_y(t),f].
\]
This benchmark tests whether the model can capture non-stationary velocity evolution under persistent external forcing. We evaluate velocity prediction at the $14$th time step.

\subsection{Aerodynamic and Industrial Benchmarks}

\paragraph{Airfoil.}
The Airfoil benchmark evaluates transonic flow over 2D airfoil geometries governed by the Euler equations. The geometries are generated from deformations of the NACA-0012 airfoil and discretized on a structured mesh of size $221\times 51$. The model takes mesh node coordinates as input and predicts the Mach number field over the domain. We use $1000$ training samples and $200$ testing samples.

\paragraph{AirfRANS.}
AirfRANS is a high-fidelity 2D aerodynamic benchmark based on Reynolds-Averaged Navier--Stokes (RANS) simulations. The dataset contains airfoils from the NACA 4-digit and 5-digit families, with variations in airfoil geometry, angle of attack, and Reynolds number. Each sample is represented on an unstructured mesh with approximately $32{,}000$ points. We use $800$ samples for training and $200$ samples for testing.

The prediction targets include physical quantities on both the volume and airfoil surface, as well as the lift coefficient $C_L$. In addition to field-level relative error, we report the relative error of $C_L$ and Spearman's rank correlation $\rho_L$ to measure whether the model preserves the ordering of airfoil designs by lift.

\paragraph{ShapeNet-Car.}
ShapeNet-Car evaluates 3D automotive aerodynamics for vehicle geometries drawn from the car category of ShapeNet. The task is to model airflow around cars moving at $72$ km/h. The dataset contains $889$ samples, with $789$ used for training and $100$ used for testing. Each geometry is discretized into an unstructured mesh with $32{,}186$ points. The input features include spatial coordinates, signed distance function values, and surface normal vectors.

The model predicts pressure and velocity fields over the discretized domain. These predicted fields are then used to compute the drag coefficient $C_D$. We evaluate volume and surface field errors, relative drag coefficient error, and Spearman's rank correlation $\rho_D$ for aerodynamic design ranking.

\paragraph{AirCraft.}
AirCraft is a large-scale 3D external aerodynamics benchmark consisting of high-fidelity CFD simulations over more than $30$ aircraft designs. The dataset includes variations in flow conditions such as Mach number and angle of attack. Each aircraft is represented by a large unstructured mesh with approximately $300{,}000$ points. Each point contains geometric information including spatial coordinates and surface normals.

The model predicts six physical quantities:
\[
    C_p,\quad \rho,\quad U,\quad V,\quad W,\quad p,
\]
where $C_p$ is the pressure coefficient, $\rho$ is density, $(U,V,W)$ are the velocity components, and $p$ is pressure. Due to dataset availability, we use $110$ samples for training and $10$ samples for testing. This benchmark evaluates scalability to realistic 3D engineering geometries and multi-field prediction.

\subsection{Evaluation Metrics}

\paragraph{Relative Field Error.}
For all benchmarks, we report relative $L_2$ error for field prediction:
\[
    \mathrm{Rel}\text{-}L_2
    =
    \frac{\|\hat{u}-u\|_2}{\|u\|_2},
\]
where $u$ is the ground-truth physical field and $\hat{u}$ is the model prediction.

\paragraph{Aerodynamic Coefficient Error.}
For AirfRANS and ShapeNet-Car, we also evaluate scalar aerodynamic coefficients derived from the predicted fields. Specifically, we report lift coefficient error for AirfRANS and drag coefficient error for ShapeNet-Car:
\[
    \mathrm{Rel}\text{-}L_2(C)
    =
    \frac{\|\hat{C}-C\|_2}{\|C\|_2}.
\]
Here $C$ denotes either the lift coefficient $C_L$ or drag coefficient $C_D$.

\paragraph{Spearman Rank Correlation.}
For aerodynamic design tasks, accurate ranking of candidate geometries is often as important as absolute prediction error. Therefore, for AirfRANS and ShapeNet-Car, we report Spearman's rank correlation between predicted and ground-truth coefficients:
\[
    \rho
    =
    \frac{
    \operatorname{cov}(R(C),R(\hat{C}))
    }{
    \sigma_{R(C)}\sigma_{R(\hat{C})}
    },
\]
where $R(\cdot)$ denotes the rank transformation. Higher values indicate better preservation of design ordering.

\subsection{Implementation Details}
\label{app:implementations}

All experiments are conducted under consistent training protocols to ensure fair comparison across models and datasets. The full training and model configurations are summarized in Table~\ref{tab:hyperparameters}.

\textbf{Model Configuration.}
Unless otherwise specified, Advectra uses \(L=8\) layers, \(H_r=8\) routing heads, \(G=32\) latent anchors, and channel dimension \(D=128\). This setting is used across all benchmarks. For baseline models, we use the default best-reported configurations from the corresponding prior work whenever available.

\textbf{Transport, Kernel, and Ordering Implementation.}
The embedded token chart \(X=\phi_x(H_{\mathrm{phys}})\) and feature payload \(V=\phi_v(H_{\mathrm{phys}})\) are implemented by lightweight feedforward projections. For structured grid inputs, we use convolutional projections; for unstructured inputs, we use pointwise linear/MLP projections. For spatiotemporal benchmarks, the time offset is concatenated to the input token representation before these projections.

The anchor-frame context used by the transport maps is
\[
    s=\bar C\oplus\bar\Delta,
    \qquad
    \bar C=\frac{1}{G}\sum_{g=1}^{G}C_g,
    \qquad
    \bar\Delta=\frac{1}{G}\sum_{g=1}^{G}\Delta_g,
\]
computed independently for each head. The source anchors \(C\) are learned parameters, while the target offset \(\Delta\) is initialized at zero and parameterized with a bounded \(\tanh\) scaling in implementation. This makes \(\Delta\) a relative source--target displacement rather than an independent target anchor table.

The displacement maps \(\mathcal{T}\) and \(\mathcal{U}\) are implemented as GELU MLPs that take the current embedded coordinate together with the anchor-frame context. This parameterization makes the displacement fields smooth functions of their inputs for fixed network weights. We use the scaled residual form
\[
    \tau(x)=x+\epsilon\widetilde{\mathcal{T}}(x,s),
    \qquad
    \epsilon=0.95,
\]
and analogously for anchors. The unscaled networks \(\widetilde{\mathcal{T}}\) and \(\widetilde{\mathcal{U}}\) are constrained to be at most \(1\)-Lipschitz by construction. Therefore, the actual displacements \(\epsilon\widetilde{\mathcal{T}}\) and \(\epsilon\widetilde{\mathcal{U}}\) have Lipschitz constant at most \(0.95<1\), making the smooth residual coordinate updates locally non-folding by the inverse function theorem. In the main text, we absorb \(\epsilon\) into \(\mathcal{T}\) and write \(\tau(x)=x+\mathcal{T}(x,s)\).

The similarity kernel \(\kappa\) is implemented by a lightweight GELU MLP-based hyperspherical support function, whose hidden width is fixed relative to the model channel dimension. Given embedded token coordinates and transported anchors, both are normalized before computing assignment logits. We use a learned support module to modulate the local radius or diagonal metric of the kernel, and then obtain the source and target assignment matrices by softmax normalization over anchors. This implements the positive kernel \(\kappa:\mathbb{R}^{d}\times\mathbb{R}^{d}\to\mathbb{R}_{+}\) used in Sec.~\ref{sec:method}.

For geometric ordering, we compute midpoint anchors \(m_g=(C^{src}_g+C^{tgt}_g)/2\) and apply a deterministic greedy nearest-neighbor traversal in the transported anchor space. The traversal starts from the anchor with minimal coordinate under a fixed lexicographic ordering, and ties are broken deterministically. The ordering is refreshed every \(K\) iterations, with \(K=100\) unless otherwise specified, and no gradient is backpropagated through the discrete ordering operation. The ordered sequence is processed by a bidirectional SSM block.

\textbf{Training Protocols.}
For the industrial and engineering benchmarks (Airfoil, AirfRANS, ShapeNet-Car, and AirCraft), all models are trained using identical optimization settings as summarized in Table~\ref{tab:hyperparameters}. These include consistent choices of optimizer, learning rate, batch size, and scheduler to isolate architectural differences.

For the POSEIDON benchmarks (GCE-RT, NS-Tracer-PwC, and FNS-KF), we follow the semigroup training protocol introduced in POSEIDON. Specifically, we use the AdamW optimizer, batch size $32$, and a OneCycle learning rate scheduler. All models are trained to predict a fixed time horizon ($t=7$ for GCE-RT and $t=14$ for NS-Tracer-PwC and FNS-KF). Compared to the original POSEIDON setup, we use $1000$ training trajectories and $130$ testing trajectories, and apply spatial and temporal downsampling by a factor of $2$ for NS-Tracer-PwC and FNS-KF.


\paragraph{Parameter-Controlled Evaluation.}
For the industrial benchmarks AirfRANS, ShapeNet-Car, and AirCraft, we use the default best-reported configurations from prior work for all baselines. In particular, latent-token baselines such as Transolver and LaMO use the commonly reported \(G=32\) setting with channel width \(D=256\). Advectra uses the same latent-token budget \(G=32\) but a smaller channel width \(D=128\), resulting in comparable or lower parameter count. Thus, the main industrial comparisons do not reduce baseline token resolution relative to their reported default settings.

\textbf{Baselines and Reproduction Protocol.}
For the primary baselines, we follow official implementations and reported hyperparameter settings, including the default best-reported token counts and channel widths on industrial benchmarks when available. For datasets where results are not available, we retrain all models under these reported configurations and identical optimization settings.

\paragraph{Reproducibility and Compute.}
All experiments are conducted over three independent random seeds. We report the mean performance across runs for all metrics. Unless otherwise specified, experiments are run on a single-node setup with one NVIDIA RTX~4090 GPU. For larger experiments that exceed the memory capacity of this setup, such as AirCraft, we use a single-node setup with one NVIDIA RTX~A6000 GPU.

\textbf{AirCraft Settings and Baseline Adaptations.}
The AirCraft benchmark consists of high-fidelity 3D CFD simulations of aircraft geometries discretized as unstructured meshes with approximately $300{,}000$ nodes per sample. Each node contains spatial coordinates and geometric features, and the model predicts six physical quantities: pressure coefficient ($C_p$), density ($\rho$), velocity components ($U,V,W$), and static pressure ($P$).

We perform a grid search over architectural hyperparameters for all models, sweeping layers $\{2,4,6,8\}$, channels $\{64,128,256,512\}$, and heads $\{1,2,4,8\}$. The best-performing configuration is selected under a strict memory budget of $48$GB.

\begin{itemize}
    \item \textit{Baselines Implementation:} GraphSAGE, PointNet, and Graph U-Net are implemented following the Transolver codebase, while GNO, GINO, and related models are adapted from official implementations. All models are trained to jointly predict the six physical fields.

    \item \textit{Efficiency Considerations:} Transformer-based models (Transolver, LaMO, and Advectra) typically require less than $12$GB memory. Graph-based models such as GNO and MeshGraphNet exceed the $48$GB budget even with reduced hidden dimensions; for these baselines, we allow higher memory usage to enable comparison.
\end{itemize}

The training configuration for AirCraft follows the same setup as AirfRANS.

\begin{table}[H]
\caption{Training and model configurations for Advectra. Baseline configurations follow official or best-reported settings when available, as described in Appendix~\ref{app:implementations}. Relative L2 denotes field reconstruction error. \(L_v\) and \(L_s\) denote volume and surface losses. Here \(H_r\) denotes the number of parallel latent routing heads, i.e., independent anchor groups used for source/target assignments and feature projections. It is not a multi-head self-attention parameter; the latent sequence core is a bidirectional SSM.}
\label{tab:hyperparameters}
\begin{center}
\begin{small}
\begin{sc}
\resizebox{\textwidth}{!}{
\begin{tabular}{lcccccc|cccc}
\toprule
 & \multicolumn{6}{c}{\textbf{Training Configuration}} & \multicolumn{4}{c}{\textbf{Model Configuration}} \\
\cmidrule(lr){2-7} \cmidrule(lr){8-11}
\textbf{Benchmark} & \textbf{Loss} & \textbf{Epochs} & \textbf{Init LR} & \textbf{Optimizer} & \textbf{Batch} & \textbf{Scheduler} & $L$ & $H_r$ & $D$ & $G$ \\
\midrule
Airfoil & Relative L2 & 500 & $10^{-3}$ & AdamW & 4 & OneCycle & 8 & 8 & 128 & 32 \\
\midrule
AirfRANS & $L_v + L_s$ & 400 & $10^{-3}$ & Adam & 1 & OneCycle & 8 & 8 & 128 & 32 \\
ShapeNet-Car & $L_v + 0.5L_s$ & 200 & $10^{-3}$ & Adam & 1 & OneCycle & 8 & 8 & 128 & 32 \\
AirCraft & $L_v + L_s$ & 400 & $10^{-3}$ & Adam & 1 & OneCycle & 8 & 8 & 128 & 32 \\
\midrule
GCE-RT & Relative L2 & 500 & $10^{-3}$ & AdamW & 32 & OneCycle & 8 & 8 & 128 & 32 \\
NS-Tracer-PwC & Relative L2 & 500 & $10^{-3}$ & AdamW & 32 & OneCycle & 8 & 8 & 128 & 32 \\
FNS-KF & Relative L2 & 500 & $10^{-3}$ & AdamW & 32 & OneCycle & 8 & 8 & 128 & 32 \\
\bottomrule
\end{tabular}
}
\end{sc}
\end{small}
\end{center}
\end{table}

\subsection{Experimental Settings for Out-of-Distribution Generalization}

We evaluate generalization on unseen Reynolds numbers and Angles of Attack using AirfRANS. The training and test sets are constructed with disjoint parameter ranges. The detailed settings are provided in Table~\ref{tab:ood_settings}.

\begin{table}[h]
\centering
\caption{Settings for OOD generalization experiments on AirfRANS. The training and test sets contain disjoint ranges for Reynolds numbers and angles of attack.}
\label{tab:ood_settings}
\begin{small}
\begin{sc}
\resizebox{\textwidth}{!}{
\begin{tabular}{lccccc}
\toprule
& \multicolumn{2}{c}{\textbf{OOD Reynolds}} & \multicolumn{2}{c}{\textbf{OOD Angles}} \\
\cmidrule(lr){2-3} \cmidrule(lr){4-5}
\textbf{Dataset} & \textbf{Range} & \textbf{Samples} & \textbf{Range} & \textbf{Samples} \\
\midrule
Training Set & $[3 \times 10^6, 5 \times 10^6]$ & 500 & $[-2.5^\circ, 12.5^\circ]$ & 800 \\
Test Set     & $[2 \times 10^6, 3 \times 10^6] \cup [5 \times 10^6, 6 \times 10^6]$ & 500 & $[-5^\circ, -2.5^\circ] \cup [12.5^\circ, 15^\circ]$ & 200 \\
\bottomrule
\end{tabular}
}
\end{sc}
\end{small}
\end{table}

\section{Standard Deviation}
\begin{table}[H]
\caption{Standard deviations for GCE-RT, NS-Tracer, and FNS-KF benchmarks. We report the mean $\pm$ standard deviation over three independent runs. All values are relative errors. The best baseline model for each task is indicated in parentheses.}
\label{tab:std_dev_fluids}
\begin{center}
\begin{small}
\begin{sc}
\resizebox{\textwidth}{!}{
\begin{tabular}{lccccccc}
\toprule
& \multicolumn{4}{c}{GCE-RT} & \multicolumn{2}{c}{NS-Tracer-PwC} & FNS-KF \\
\cmidrule(lr){2-5} \cmidrule(lr){6-7} \cmidrule(lr){8-8}
Model & $\rho$ & $\mathbf{v}$ & $p$ & tracer & $\mathbf{v}$ & tracer & $\mathbf{v}$ \\
\midrule
Best Baseline & $0.0171{\scriptstyle\pm 0.0008}$ & $0.0671{\scriptstyle\pm 0.0012}$ & $0.0015{\scriptstyle\pm 0.0001}$ & $0.0551{\scriptstyle\pm 0.0014}$ & $0.1577{\scriptstyle\pm 0.0021}$ & $0.2047{\scriptstyle\pm 0.0068}$ & $0.1317{\scriptstyle\pm 0.0010}$ \\
& (Transolver++) & (Transolver++) & (LinearNO) & (Transolver++) & (FNO) & (U-Net) & (FNO) \\
\midrule
\textbf{Advectra (Ours)} & $\mathbf{0.0149}{\scriptstyle\pm 0.0003}$ & $\mathbf{0.0595}{\scriptstyle\pm 0.0007}$ & $\mathbf{0.0014}{\scriptstyle\pm 0.0001}$ & $\mathbf{0.0488}{\scriptstyle\pm 0.0012}$ & $\mathbf{0.1209}{\scriptstyle\pm 0.0011}$ & $\mathbf{0.1607}{\scriptstyle\pm 0.0025}$ & $\mathbf{0.1202}{\scriptstyle\pm 0.0014}$ \\
\bottomrule
\end{tabular}
}
\end{sc}
\end{small}
\end{center}
\end{table}

\begin{table}[H]
\caption{Standard deviations for geometric and mesh-based benchmarks. We report the mean $\pm$ standard deviation over three independent runs. Spearman's rank correlations for ShapeNet-Car and AirfRANS are presented as percentages (scaled by 100). Airfoil values are relative errors. The best baseline model for each task is indicated in parentheses.}
\label{tab:std_dev_mesh}
\begin{center}
\begin{small}
\begin{sc}
\resizebox{0.5\textwidth}{!}{
\begin{tabular}{lccc}
\toprule
& Airfoil & ShapeNet-Car & AirfRANS \\
Model & (Rel. Error) & (Corr. \%) & (Corr. \%) \\
\midrule
Best Baseline & $0.0047{\scriptstyle\pm 0.0001}$ & $99.28{\scriptstyle\pm 0.02}$ & $99.92{\scriptstyle\pm 0.01}$ \\
& (LaMO) & (SpiderSolver) & (LinearNO) \\
\midrule
\textbf{Advectra (Ours)} & $\mathbf{0.0041}{\scriptstyle\pm 0.0001}$ & $\mathbf{99.41}{\scriptstyle\pm 0.02}$ & $\mathbf{99.95}{\scriptstyle\pm 0.01}$ \\
\bottomrule
\end{tabular}
}
\end{sc}
\end{small}
\end{center}
\end{table}

\section{Limitations, Future Work, and Societal Impact}

This work focuses on transport-dominated systems in which coherent structures move and deform across coordinate frames. Advectra uses learned, locally non-folding displacement maps to define approximately co-moving latent source and target charts. This constraint applies to the representation-level coordinate map, not to the full input-output operator. Therefore, it does not preclude modeling sharp geometric boundaries, discontinuous coefficients, or localized field structures, which can still be represented through feature channels, learned assignments, nonlinear latent dynamics, and reconstruction. However, strongly shock-dominated regimes with characteristic crossing, topology change, or many-to-one physical transport may require additional mechanisms beyond a single smooth latent chart. We leave further investigation of piecewise latent charts, shock-aware gating, conservative reconstruction, and discontinuity-aware transport parameterizations as future work.

The learned maps are not guaranteed to recover identifiable physical trajectories or preserve global measures exactly. They should instead be viewed as architectural mechanisms for aligning evolving structures in latent space. Our analysis and ablations support the roles of transport, ordering, sorting frequency, and token count in the evaluated regimes, while exhaustive sweeps over all hyperparameters and datasets remain computationally expensive. Future work could further strengthen the physical structure of the learned maps through conservative or geometry-preserving parameterizations.

\paragraph{Societal impact.}
Advectra is a general surrogate modeling method for physics and engineering simulations, so its societal effects are primarily indirect. Potential benefits include reducing the computational cost of scientific simulation and accelerating design workflows. Potential risks arise if learned surrogates are used as certified solvers in safety-critical settings without adequate validation, uncertainty estimation, or domain-specific verification. Privacy and fairness concerns are not central to the PDE and engineering benchmarks studied here.

\clearpage

\end{document}